\newtheorem{proposition}{Proposition}
\newtheorem{lemma}{Lemma}
\def\semicolon{;}
\def\applytolist#1{
    \expandafter\def\csname multi#1\endcsname##1{
        \def\multiack{##1}\ifx\multiack\semicolon
            \def\next{\relax}
        \else
            \csname #1\endcsname{##1}
            \def\next{\csname multi#1\endcsname}
        \fi
        \next}
    \csname multi#1\endcsname}
\def\calc#1{\expandafter\def\csname c#1\endcsname{{\mathcal #1}}}
\def\bbc#1{\expandafter\def\csname bb#1\endcsname{{\mathbb #1}}}
\title{Sparse Interaction Additive Networks via Feature Interaction Detection and Sparse Selection}
\author{%
James Enouen \\
Department of Computer Science \\
University of Southern California \\
Los Angeles, CA \\
\texttt{enouen@usc.edu} \\
\And
Yan Liu \\
Department of Computer Science \\
University of Southern California \\
Los Angeles, CA \\
\texttt{yanliu.cs@usc.edu} \\
}
\begin{document}

\maketitle

\begin{abstract}
There is currently a large gap in performance between the statistically rigorous methods like linear regression or additive splines and the powerful deep methods using neural networks.  
Previous works attempting to close this gap have failed to fully investigate the exponentially growing number of feature combinations which deep networks consider automatically during training.  
In this work, we develop a tractable selection algorithm to efficiently identify the necessary feature combinations by leveraging techniques in feature interaction detection.
Our proposed Sparse Interaction Additive Networks (SIAN) construct a bridge from these simple and interpretable models to fully connected neural networks.  
SIAN achieves competitive performance against state-of-the-art methods across multiple large-scale tabular datasets and consistently finds an optimal tradeoff between the modeling capacity of neural networks and the generalizability of simpler methods.
\end{abstract}

\section{Introduction}
Over the past decade, deep learning has achieved significant success in providing solutions to challenging AI problems like computer vision, language processing, and game playing \cite{he2015resnet,Vaswani2017transformer,deepmind2017alphaZero}.
As deep-learning-based AI models increasingly serve as important solutions to applications in our daily lives, we are confronted with some of their major disadvantages.
First, current limitations in our theoretical understanding of deep learning makes fitting robust neural networks a balancing act between accurate training fit and low generalization gap.
Questions which ask how the test performance will vary given the choice of architecture, dataset, and training algorithm remain poorly understood and difficult to answer.
Second, infamously known as blackbox models, deep neural networks greatly lack in interpretability.
This continues to lead to a variety of downstream consequences in applications: unexplainable decisions for stakeholders, the inability to distinguish causation from correlation, and a misunderstood sensitivity to adversarial examples.

In contrast,  simpler machine learning models such as linear regression, splines, and the generalized additive model (GAM) \cite{hastie1990originalGAM} naturally win in interpretability and robustness.
Their smaller number of parameters often have clear and meaningful interpretations and these methods rarely succumb to overfitting the training data.
The main shortcoming of these simpler methods is their inability to accurately fit more complex data distributions.

There is a growing strand of literature attempting to merge the interpretability of additive models with the streamlined differentiable power of deep neural networks (DNNs).
Key works in this direction such as NAM and NODE-GAM have been able to successfully model one-dimensional main effects and two-dimensional interaction effects using differentiable training procedures \cite{agarwal2020nam, chang2022nodegam}.
Although many other works have found similar success in modeling one- and two-dimensional interactions, few have made practical attempts towards feature interactions of size three or greater, which we will refer to throughout as {higher-order} feature interactions.
In this way, no existing works have been able to use the hallmark ability of neural networks to model higher-order interactions: amassing the influence of hundreds of pixels in computer vision and {combining} specific words from multiple paragraphs in language processing.
In this work, we bring interpretable models one step closer towards the {impressive} differentiable 
power of neural networks by developing a simple but effective selection algorithm and an efficient implementation to be able to train neural additive models which fit higher-order interactions of degrees three and greater. 
The proposed Sparse Interaction Additive Networks (SIANs) consistently achieve results that are competitive with state-of-the-art methods across multiple datasets.
We summarize our contributions as follows:
\begin{itemize}
  \item We develop a feature interaction selection algorithm which efficiently selects from the exponential number of higher-order feature interactions by {leveraging heredity} and {interaction detection}.  This allows us to construct higher-order neural additive models for medium-scale datasets unlike previous works in neural additive modeling which only consider univariate and bivariate functions. 
  \item We provide further insights into the tradeoffs faced by neural networks between better generalization and better functional capacity.  By tuning the hyperparameters of our SIAN model, we can gradually interpolate from one-dimensional additive models to full-complexity neural networks.  We observe fine-grained details about how the generalization gap increases as we add capacity to our neural network model.
  \item We design a block sparse implementation of neural additive models which is able to greatly improve the training speed and memory efficiency of neural-based additive models.  These improvements over previous NAM implementations allow shape functions to be computed in parallel, allowing us to train larger and more complex additive models.
  We provide code for our implementation and hope the tools provided can be useful throughout the additive modeling community for faster training of neural additive models. \footnote{Available at \href{https://github.com/enouenj/sparse-interaction-additive-networks}{github.com/EnouenJ/sparse-interaction-additive-networks}}
\end{itemize}

\section{Related Work}
The generalized additive model (GAM) \cite{hastie1990originalGAM} has existed for decades as a more expressive alternative to linear regression, replacing each linear coefficient with a nonparametric function.
Two-dimensional extensions appeared in the literature shortly after its introduction \cite{wahba1994ssanova}.
Over the years, an abundance of works have used the GAM model as an interpretable method for making predictions, with the choice of functional model typically reflecting the most popular method during the time period: regression splines, random forests, boosting machines, kernel methods, and most recently neural networks \cite{hooker2007functionalANOVA,caruana2015intelligible,kandasamy16salsa,yang2020gamiNet}. 
Two of the most prominent neural network based approaches are NAM and NODE-GA$^2$M \cite{agarwal2020nam, chang2022nodegam}.
The former stacks multilayer perceptrons to build a one-dimensional GAM; the latter connects differentiable decision trees to build a two-dimensional GAM.
Both have demonstrated competitive performance and interpretable trends learned over multiple tabular datasets.
Other neural network extensions \cite{wang2021partiallyInterpretableEstimatorPIE,oneill2021regressionNetworks} 
increase the modeling capacity to higher-order interactions by first training a two-dimensional model and then training an additional blackbox neural network to fit the residual error.
While this approach does have higher modeling capacity, it foregoes interpretable insights on the higher-order feature interactions and suffers the same inclination to overfit held naturally by deep neural networks. 

Other works in additive modeling instead focus on extending the univariate GAM to sparse additive models in the high-dimensional inference regime \cite{lin2006cosso,ravikumar2009spam,meier2009hdam,xu2022snam}.
Further extensions of these methods to sparse bivariate models for high-dimensional inference also exist \cite{tyagi2016spam2,liu2020ssam}.
These works extend classical high-dimensional inference techniques like LASSO and LARS from linear parameters to additive functions by shrinking the effect of minimally important variables and emphasizing sparse solutions. 
We note that, unlike these works, we do not use sparsity in the soft-constrained sense to shrink features from a fixed selection, but instead adaptively use feature interaction measurements to hierarchically build a feasible set of interactions.
The only existing work in additive modeling which uses hierarchy to induce sparsity in the same sense as this work is the GAMI-Net which uses a three stage procedure to select candidate pairs under a hierarchical constraint \cite{yang2020gamiNet}.
Extending their procedure to three or higher dimensions would require four or more stages of training and is left unexplored in their work.

Although the theoretical extension to three-dimensional additive models is clear, 
there is currently a lack of discussion surrounding the practical challenges faced when trying to model three-dimensional shape functions.
One of the few works to pursue practical implementation of higher-order GAMs on real-world datasets is the work of SALSA \cite{kandasamy16salsa}.
This work uses a specialized kernel function to fit additive models of order three and higher.
Their work also corroborates our finding that optimal performance is achieved by different orders for different datasets. 
However, the kernel-based approaches used in this work make it unsuitable beyond small-scale datasets with few samples.
This makes our work one of the first to train additive models of higher-order interactions which leverage the automatic differentiation and GPU optimization toolkits which have become commonplace in modern workstations.

\section{Methods}
\paragraph{Notation} We denote a $d$-dimensional input as $x\in\bbR^d$ with its $i$-th component as $x_i$; its corresponding output is denoted $y\in\bbR$.
We consider one-dimensional $y$ as in regression and binary classification.
We will use $f(x)$ to denote the function or model used to approximate $y$, implicitly considering the additive noise model $y=f(x)+\varepsilon$ for some noise term $\varepsilon$.
We denote a subset of the set of features by $\cI=\{i_1,\dots,i_{|\cI|}\}\subseteq[d]:=\{1,\dots,d\}$.
Its cardinality is denoted $|\cI|$, its complement $\setminus\cI$, and its power set $\cP(\cI)$.
For $x\in\bbR^d$, we define $x_\cI\in\bbR^d$ such that: 
\[
(x_\cI)_i =  \left\{\begin{array}{lr}
        x_i  & \text{if}\quad i\in \cI\\
        0 & \text{otherwise }\\
        \end{array}\right.
\]

\begin{figure*}[t!]
    \centering
    \includegraphics[width=0.9\textwidth]{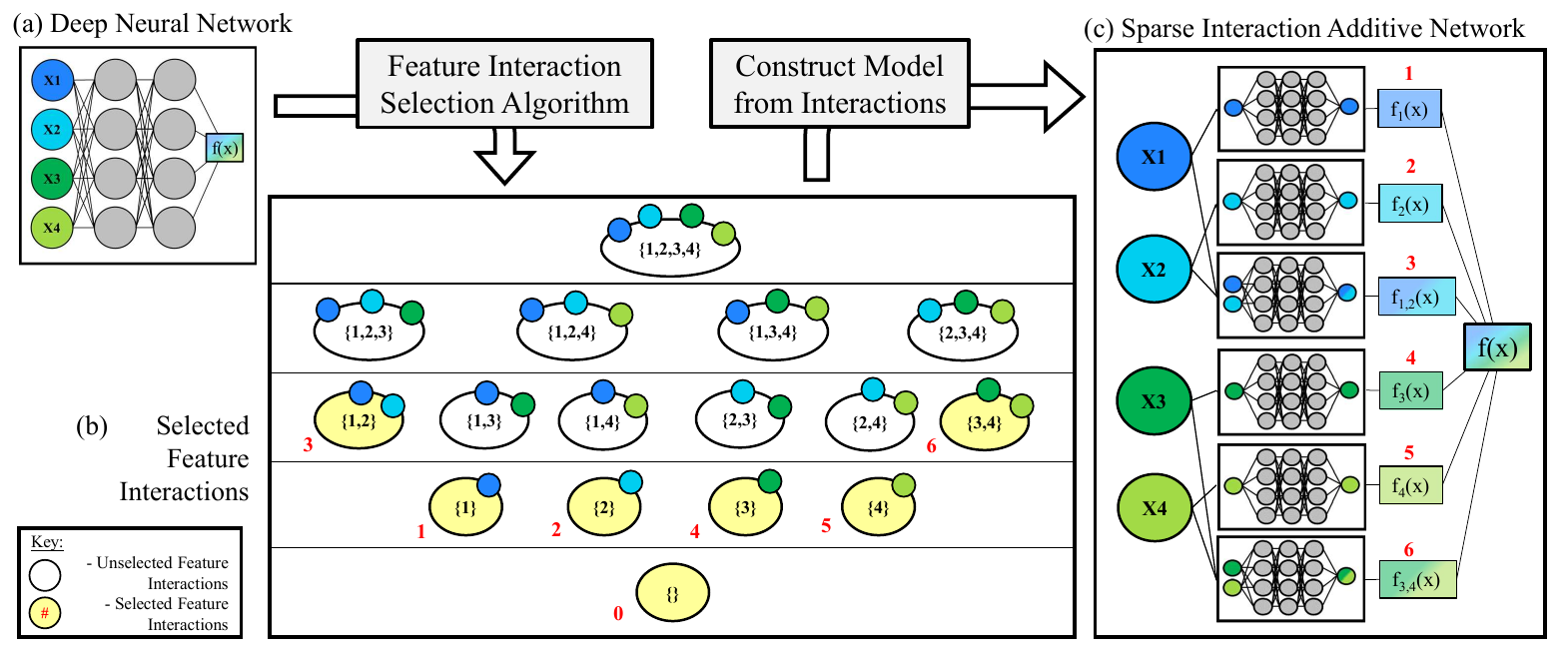}
    \caption{SIAN Pipeline Diagram. (a) We train a base DNN to be able to learn the feature interactions from the dataset given the $d=4$ input features of $X_1,X_2,X_3,X_4$. (b) We feed our DNN into the FIS algorithm to be able to select from the $2^d=16$ possible subsets of $\{X_1,X_2,X_3,X_4\}$. (c) We use our selected feature subsets as the GAM array (of length 6 in the image) for our full SIAN network. The empty set function $f_{\{\}}$ is a constant we absorb into the last additive layer of $f(x)$. Finally, we train our SIAN neural network using the specified architecture.}
    \label{fig:sian_diagram}
\end{figure*}


\subsection{Generalized Additive Models}
\label{sec:gam}

We first consider the generalized additive model (GAM) \cite{hastie1990originalGAM}, which extends linear regression  by allowing each input feature to have a nonlinear relationship with the output.
\begin{equation}\label{eq:GAM_1}
g(y) = f_{\emptyset} + f_1(x_1) + \dots + f_d(x_d)
\end{equation}
Each of the $f_i$ `reshapes' their respective feature $x_i$ and then adds the reshaped feature to the total prediction.
These $f_i$ are hence called shape functions and were traditionally fit using regression splines.
The function $g$ is the link function which will be the identity function for regression and the inverse-sigmoid $g(y)=\log(\frac{y}{1-y})$ for classification.
$f_\emptyset$ is a normalizing constant.
This original formulation where each shape function considers only one feature we will further refer to as GAM-$1$.

\paragraph{Feature Interactions}
In order to extend this definition we must consider the interplay which occurs between multiple input features.
A `non-additive feature interaction' between features $\cI\subseteq[d]$ for the function $f$ is said to exist when the function $f$ cannot be decomposed into a sum of $|\cI|$ arbitrary subfunctions such that each subunction $f_i$ excludes one of the interacting variables $x_i$:
$f(x) \neq \sum_{i\in\cI} f_i(x_{\{1, 2 , \dots, d\}\setminus i})$ \cite{friedman2008predictive, sorokina2008detecting, tsang2018neural}.
In other words, the entire feature set  $\{x_i$ : $i\in\cI\}$ must be simultaneously known to be able to correctly predict the output $f(x)$. 

The goal of \textit{feature interaction detection} is to uncover these groups of features which depend on one another.
For smooth functions, this can be quantitatively done by finding the sets $\mathcal{I}\subseteq[d]$ such that the \textit{interaction strength}, $\omega(\mathcal{I})$, is positive and large.
\begin{equation}
\omega({\cI}) := 
\bbE_{x}\left[\frac{\partial^{|{\cI|}} f(x)}{\partial x_{i_1}\partial x_{i_2} \dots\partial x_{i_{|\mathcal{I}|}}}\right]^2 > 0.
\label{eq:interaction_strength}
\end{equation}

We may now adjust the GAM definition to capture feature interactions by considering a set of $T$ specified interactions, $\{\mathcal{I}_t\}_{t=1}^T$, where  each $\cI_t\subseteq[d]$ is an interaction of size 
$|\cI_t|$:
\begin{equation}\label{eq:GAM_K}
g(y) = f_\emptyset + \sum_i f_i(x_i) +
        \sum_{t} f_{\mathcal{I}_t}(x_{\mathcal{I}_t})
\end{equation}
The third term extends GAMs to full capacity models which can represent nonlinear dependencies of arbitrary feature sets.
For instance, if our set of interactions $\{\mathcal{I}_t\}_{t=1}^T$ includes the complete feature set $[d]$, then our model has {exactly} the same capacity as the underlying functional model we choose for the shape functions (splines, random forests, deep neural networks, etc.)
An abundance of previous works have used this extended version of the GAM model, often called the Smoothing Spline ANOVA model or the functional ANOVA model \cite{caruana2015intelligible,ravikumar2009spam,meier2009hdam,hooker2007functionalANOVA,wahba1994ssanova}.

Throughout this work, we will use GAM-$K$ to refer to a GAM whose highest order interaction in $\{\mathcal{I}_t\}_{t=1}^T$ is of cardinality $K$.
(i.e. $|\cI_t|\leq K$ $\forall t\in[T]$.)
For instance, we will call the NODE-GA$^2$M model \cite{chang2022nodegam} a GAM-2 model since the interaction sets are all possible feature pairs: $\{ \{i,j\} : i<j \hspace{0.5em}\text{for}\hspace{0.5em} i,j\in[d]\}$.
We will refer to our SIAN networks as SIAN-$K$ using the same convention.

\subsection{Feature Interaction Selection}
A key concern of using neural networks to fit the shape functions is keeping the number of networks low enough that our training time computation is kept reasonable.
While this is typically not a problem for the GAM-$1$, this can quickly become an issue for the GAM-$2$.
For instance, if we have an input variable $x$ with 30 features, then including all pairwise functions would need to cover the
$\binom{30}{2}=435$ possible pairs.
Although learning $465$ linear coefficients is reasonable,
training hundreds of neural networks becomes less so.
Moreover, this quantity only grows exponentially as we increase $K$ to higher-order interactions (3, 4, 5, etc.)
In an effort to combat this growing complexity, we introduce a Feature Interaction Selection (FIS) algorithm which {depends} on two key ingredients: an interaction detection procedure and a heredity condition.

\paragraph{Feature Interaction Detection}
In recent years, there has been a growing body of work focused on detecting and measuring the strength of feature interactions from large-scale data.
Three of the most popular and generally applicable of these methods are the Shapley Additive Explanations (SHAP) \cite{lundberg2017shapleySHAP,lundberg2020local2global,dhamdhere2019shapley}, Integrated Hessians~\citep{sundararajan2017integratedGradients,janizek2020explaining}, and Archipelago  \cite{tsang2020archipelago}.
For our experiments, we primarily use an adaptation of Archipelago for higher-order interactions because of its compatibility with blackbox models.
In contrast, Integrated Hessians is only applicable to sufficiently smooth networks (ReLU networks are not compatible) and SHAP only has fast implementations available for tree-based approaches.
Moreover, although both SHAP and Integrated Hessians have clear ideological extensions to higher-order interactions, there are no currently available implementations.
A detailed discussion surrounding Archipelago and other detection techniques can be found in Appendix \ref{app_sec:feature_interaction_selection_details}.

\paragraph{Heredity}
The practice of only modeling the pairwise interaction effect $\{i,j\}\subseteq[d]$ for some features $i,j\in[d]$ when both of the main effects $\{i\}$ and $\{j\}$ are already being modeled has a long history throughout statistics \cite{peixoto1987hierarchicalPolynomials,Chipman1995BayesianVS,bien2013hierarchicalLasso}.
There are two main versions of this hierarchical principle explored in the literature: strong heredity and weak heredity.
If we are given that $\omega(\{i,j\})>0$, strong heredity implies that both $\omega(\{i\})>0$ and $\omega(\{j\})>0$ whereas weak heredity implies that $\omega(\{i\})>0$ or $\omega(\{j\})>0$.
For our definition of a feature interaction, we have that strong heredity holds; however, our algorithm will instead focus on a computational version of heredity which asks that $\tau$ percent of subsets are above threshold $\theta$ in order for a pair (triple, tuple, etc.) to be considered as a possible interaction.

\vspace{-\baselineskip }
\begin{wrapfigure}[34]{RB}{.5\textwidth}
\begin{minipage}{.5\textwidth}
\begin{algorithm}[H]
\caption{Feature Interaction Selection (FIS)}
\label{alg:feature_interaction_selection}
\textbf{Inputs}: Trained prediction model $f(x)$, and validation dataset $X^*=\{x^{(1)},\dots,x^{(V)}\}$\\
\textbf{Parameters}: Cutoff index $K$, cutoff threshold $\tau$, strength threshold $\theta$ \\
\textbf{Output}: $\cI$, a family of feature interactions with index at most $K$ and strength above $\theta$

\begin{algorithmic}[1] 
\STATE Set $\cI \gets \emptyset$
\STATE Set $\cJ \gets \{\{i\} : i\in[d]\}$
\STATE $k \gets 1$
\WHILE{$k\leq K$}
\FOR{$J$ in $\cJ$}
\STATE $\omega_J(X^*) \gets 0$
\FOR {$x^{(v)}\in X^*$}
\STATE $\omega_J(x^{(v)}) \gets \text{Archipelago}(f,x^{(v)},J)$
\STATE $\omega_J(X^*) \gets \omega_J(X^*) + \omega_J(x^{(v)})$
\ENDFOR
\STATE $\omega_J(X^*) \gets \omega_J(X^*)\cdot\frac{1}{|X^*|}  $
\IF{$\omega_J(X^*) > \theta$}
\STATE $\cI \gets \cI \cup \{J\}$
\ENDIF
\ENDFOR
\STATE $\cJ' \gets \{ J : J\in\cP([d]); |J| = k+1 \}$
\FOR{$J$ in $\cJ'$}
\STATE $\nu(J) \gets \frac{1}{{|J|}}\sum_{I\subseteq J}[{1}_{I\in\cI}\cdot {1}_{\{|J|-|I|=1\}}] $
\ENDFOR
\STATE $\cJ \gets \{ J : J\in\cJ'; \nu(J) > \tau \}$
\STATE $k \gets k + 1$
\ENDWHILE
\STATE \textbf{return} $\cI$
\end{algorithmic}
\end{algorithm}
\end{minipage}
\end{wrapfigure}

\paragraph{Algorithm}
In Algorithm \ref{alg:feature_interaction_selection}, we detail how we use Archipelago to build our FIS algorithm.
The visual overview of the SIAN pipeline is also depicted in Figure \ref{fig:sian_diagram} above. 
We start by training a reference DNN 
which is required for the inductive insights generated by Archipelago.
We then pass the trained function approximator to the FIS algorithm along with hyperparameters $K$ (interpretability index), $\tau$ (heredity strength threshold), and $\theta$ (interaction strength threshold).
This procedure efficiently searches and ranks the $2^d$ possible feature subsets to produce a final set of interactions which we use to specify the SIAN model architecture. 
Finally, we train our SIAN neural network using typical gradient descent methods.
In Appendix \ref{app_sec:theoretical_discussion}, we provide a further theoretical discussion in which we prove exact recovery of the true feature interactions and show how this sparse selection leads to provably lower generalization gaps in a toy additive noise model.

\subsection{Block Sparse Implementation}
In order to improve the time and memory efficiency of the SIAN model, we implement a block sparse construction for neural additive models.
The default scheme of SIAN is to use a network module for each of the shape functions and additively combine the output features, following the implementation strategies of NAM and other popular neural additive models.
However, since each shape function network is computed sequentially, this greatly bottlenecks the computation speed of the model.
We instead construct a large, block sparse network which computes the hidden representations of all shape function networks simultaneously, leveraging the fact that each shape subnetwork has the same depth.
Using this block sparse network allows for shape features to be computed in parallel, leading to a significant improvement in training speed.
The main consequence of this design is a higher footprint in memory; therefore, we also develop a compressed sparse matrix implementation of the network which has a greatly reduced memory footprint for saving network parameters.
SIAN is able to interchange between these different modes with minimal overhead, allowing for  
faster training in the block sparse module, lower memory footprint in the compressed module, and convenient visualization of shape functions in the default module.
We provide further numerical details of our gains in Section \ref{sec:compute_speedups}.

\section{Datasets}
Our experiments focus on seven machine learning datasets.
Two are in the classification setting, the MIMIC-III Healthcare and the Higgs datasets \cite{2016mimicIII,pierre2014exoticParticlesHiggsDataset}.
The other five are in the regression setting, namely the Appliances Energy, Bike Sharing, California Housing Prices, Wine Quality, and Song Year datasets \cite{candanedo2017appliancesEnergyDataset,2013bikeSharing,1997caliHousing,cortez2009wineQualityDataset,BertinMahieux2011millionSongYearDataset}.
More details about each dataset are provided in Table \ref{tab:datasets}.
We evaluate the regression datasets using mean-squared error (MSE).
We measure the performance on the classification datasets using both the area under the receiver operating characteristic (AUROC) and the area under the precision-recall curve (AUPRC) metrics.
We report both metrics for the MIMIC dataset since the positive class is only 9\% of examples and report only AUROC for the Higgs dataset since it is relatively well-balanced.

\subsection{Experiment Details}
For the baseline DNNs we are using hidden layer sizes [256,128,64] with ReLU activations. 
For the GAM subnetworks we are using hidden layer sizes [16,12,8] with ReLU activations.
We use L1 regularization of size 5e--5.
In the main results section, we report the results for each $K\in\{1,2,3,5\}$ using only a single value of $\tau$ and $\theta$.
The hyperparameter $\tau$ was taken to be $0.5$ throughout and $\theta$ was selected from a handful of potential values using a validation set.
We train all networks using Adagrad
with a learning rate of 5e--3.  
All models are evaluated on a held-out test dataset over five folds of training-validation split unless three folds are specified.
Three folds are used for NODE-GAM on all datasets as well as Song Year and Higgs for all models.
We respect previous testing splits when applicable, otherwise we subdivide the data using an 80-20 split to generate a testing set.
In addition to NODE-GA2M, 
we compare against the interpretable models LASSO and GA$^2$M EBM as well as the popular machine learning models of support vector machines (SVM), random forests (RF), and extreme gradient boosting (XGB) \cite{chen2016xgboost,nori2019interpretml}.


\begin{table}[h]
\begin{minipage}{.538\linewidth}
    \centering

\caption{Real-world datasets. $n$ is the number of samples, $d$ is the number of features, and $p$ is the percentage of data samples with the positive class label.}
    \label{tab:datasets}

    \medskip

\resizebox{.95\columnwidth}{!}{
\begin{tabular}{lrrc}
\toprule
 Dataset & \multicolumn{1}{r}{$n$}  & \multicolumn{1}{r}{$d$} & \multicolumn{1}{c}{$p$} \\
\midrule
 Higgs Boson & $11,000,000$  & $28$ & \multicolumn{1}{r}{ $53.0\%$ } \\
 MIMIC-III & $32,254$  & $30$ & \multicolumn{1}{r}{ $9.2\%$ }\\
 \midrule
 Energy Appliances & $19,735$  & $30$ & -- \\
 Bike Sharing & $17,379$  & $13$ & -- \\
 California Housing & $20,640$  & $8$ & -- \\
 Wine Quality & $ 6,497$  & $12$ & -- \\
 Song Year & $ 515,345$  & $90$ & -- \\
\bottomrule
\end{tabular}
}
\end{minipage}\hfill
\begin{minipage}{.37\linewidth}
    \centering

    \caption{MIMIC-III Performance.} 
    \label{table:mimic_results_icml}

    \medskip

\resizebox{1.\columnwidth}{!}{
\begin{tabular}{r|c|c} 
\hline\hline 
Model & {AUROC} ($\Uparrow$) & {AUPRC} ($\Uparrow$)  \\
\hline\hline
SAPS II & $ 0.792 $  & $ 0.281 $ \\
SOFA & $ 0.703$ & $ 0.225 $ \\
LASSO &  $ 0.568 $ & $ 0.396$ \\
GA$^2$M EBM & $ 0.840 $  & $ 0.375 $   \\
NODE-GA$^2$M & $ 0.826 $  & $ 0.345 $   \\
\hline
SIAN-1 & $ 0.848 $ & $ 0.409 $  \\
SIAN-2 & $\mathbf{ 0.855 }$ & $\mathbf{ 0.423 }$ \\
SIAN-3 & $\mathbf{ 0.856 }$ & $\mathbf{ 0.425 }$ \\
SIAN-5 & $\mathbf{ 0.856 }$ & $\mathbf{ 0.425 }$ \\
\hline
RF  & $ 0.821 $ & $ 0.369 $ \\
SVM & $ 0.831 $ & $ 0.404 $  \\
XGB & $ 0.843 $  & $ {0.382} $  \\
DNN & $ {0.844} $ & $ {0.382} $  \\
\hline

\hline 
\end{tabular}
}
\end{minipage}
\end{table}

\section{Results}
Across seven different datasets, SIAN achieves an average rank of 3.00 out of the 8 models we consider.
The next best performing model, NODE-GA2M, has an average rank of 3.71 out of 8.
The third best performing model, DNN, has an average rank of 3.86 out of 8.
We find that SIAN achieves consistent performance by being able to adapt to both the low-dimensional datasets and the high-dimensional datasets, finding a balance between good training fit and good generalization.

\begin{table*}[h]
\centering 
\caption{{Test} metrics for six of seven datasets. 
($\Uparrow$)/($\Downarrow$) indicates higher/lower is better, respectively.}
\resizebox{1.\columnwidth}{!}{%
\begin{tabular}{r|c c c c c c} 
\hline\hline 
Model & Appliances Energy ($\Downarrow$) & Bike Sharing ($\Downarrow$) & California Housing ($\Downarrow$) &  Wine Quality ($\Downarrow$) & Song Year ($\Downarrow$) & Higgs Boson ($\Uparrow$)\\
\hline\hline
LASSO &                0.740$\pm$0.002 & 1.053$\pm$0.001 & 0.478$\pm$0.000 & 0.575$\pm$0.002  &     1.000$\pm$0.008    &  0.635$\pm$0.000  \\
GA$^2$M EBM        & 1.053$\pm$0.138  & 0.124$\pm$0.004 & 0.265$\pm$0.002 & 0.498$\pm$0.004  &  0.894$\pm$0.001 & 0.698$\pm$0.001  \\
NODE-GA$^2$M         &  1.064$\pm$0.056 & \textbf{ 0.111$\pm$0.006 } & \textbf{0.222$\pm$0.005} & 0.521$\pm$0.009  & 0.806$\pm$0.001 & 0.811$\pm$0.000\\
\hline
SIAN-1          & \textbf{ 0.718$\pm$0.007 }  & 0.387$\pm$0.035 & 0.378$\pm$0.007 & 0.551$\pm$0.004 & 0.860$\pm$0.001 & 0.771$\pm$0.001 \\ 
SIAN-2          & 0.763$\pm$0.009    & 0.127$\pm$0.008 & 0.302$\pm$0.002 &  0.497$\pm$0.003 &       0.842$\pm$0.002   & 0.795$\pm$0.001 \\
SIAN-3          & 0.808$\pm$0.026    & 0.125$\pm$0.013 &  0.278$\pm$0.001 & 0.497$\pm$0.003 &    0.831$\pm$0.001     & 0.798$\pm$0.001 \\
SIAN-5          & 0.801$\pm$0.031   & 0.149$\pm$0.011 & 0.272$\pm$0.003 & 0.484$\pm$0.006     &   0.821$\pm$0.001  &   0.802$\pm$0.001  \\ 
\hline
RF             & 1.114$\pm$0.095 & 0.206$\pm$0.009 & 0.271$\pm$0.001 &     \textbf{0.439$\pm$0.005} &   0.994$\pm$0.005  &  0.654$\pm$0.002 \\
SVM            &  0.740$\pm$0.008    & 0.168$\pm$0.001 & 0.262$\pm$0.001 & 0.457$\pm$0.008 & 0.940$\pm$0.012  & 0.698$\pm$0.001 \\ 
XGB            & 1.188$\pm$0.119    & 0.157$\pm$0.003 & 0.229$\pm$0.002 &  0.465$\pm$0.014 &  0.881$\pm$0.002 & 0.740$\pm$0.000 \\
DNN           & 0.945$\pm$0.054  & 0.374$\pm$0.017 & 0.283$\pm$0.005 &   0.495$\pm$0.007 &   \textbf{ 0.791$\pm$0.002}    &  \textbf{0.823$\pm$0.000}   \\ 
\hline
\end{tabular}
}
\label{table:uci_datasets_results_icml}
\end{table*}

For the MIMIC dataset, we can see the models' performances in terms of both AUROC and AUPRC in Table \ref{table:mimic_results_icml}.
In addition to the models we previously described, we also compare against the interpretable medical baselines of SOFA and SAPS II which are simple logic-based scoring methods \cite{1993sapsII,1997sofa}.
We see that the machine learning methods improve over the baseline performance achieved by the SAPS and SOFA methods.

In Table \ref{table:uci_datasets_results_icml}, we can see the combined results for our six other datasets.
First, in the Appliances Energy dataset, we see that the SIAN-1 performs the best, with LASSO and SVM trailing slightly behind.
The success of one-dimensional methods on this dataset could be indicative that many of the dataset's trends are one-dimensional.
As we increase the dimension of the SIAN, the test error slowly increases as the generalization gap grows; the full-complexity DNN has even worse error than all SIAN models.

Second, in the Bike Sharing dataset, we see that the best performing model is the NODE-GA2M, with the EBM, SIAN-2, and SIAN-3 trailing only slightly behind.
All of these methods are two or three dimensional GAM models, again hinting that a significant portion of the important trends in this dataset could be bivariate.
Indeed, the most important bivariate trend accounts for more than $50\%$ of the variance in the dataset, as we explore in Figure \ref{fig:bikeShare_interactionsAndHourXWorkday} below. 

For the remaining datasets, we see that the performance of the SIAN model improves as we add higher and higher-order feature interactions.
For the California Housing dataset, we find the best performance using the differentiable tree method of NODE-GA2M.
For the Wine Quality dataset, we find the best performance using the random forest algorithm.
For the larger-scale datasets of Song Year and Higgs Boson, we find that the best performance is still obtained by a full-complexity deep neural network.
These two are the only datasets where SIANs of order five or less are not sufficient to outperform vanilla deep neural networks, implying there are important feature interactions of degree greater than five.

\subsection{Training Speed and Storage} 
\label{sec:compute_speedups}
In Table {4} 
below we see how our SIAN network compares against other popular differentiable additive models in both training time and size on disk.
For fair comparison we do not utilize our interaction selection algorithm for SIAN in this section, instead training SIAN-2 with all possible pairs.
We see that our implementation of different modes for the SIAN architecture allows us to outperform both NAM and NODE-GAM, with 2-8x faster training on GAM-1 models and 10-80x faster training on GAM-2 models.
We reiterate that because the overhead for switching between modes is negligible, SIAN enjoys the benefits of all modes: training quickly in the block sparse mode and saving succinctly in the compressed mode.

\begin{table}[h]
\centering 
\caption{Training Time and Memory Size. Experiments are run with a machine using a GTX 1080 GPU and 16GB of RAM.  The average wall-clock time over three runs is reported. } 
\label{table:training_speed_table}
\resizebox{1.\columnwidth}{!}{%
\begin{tabular}{|l|l||r|r|r|r||r|r|r|r|}
\cline{3-10}
 \multicolumn{2}{c||}{} &  \multicolumn{1}{|c}{\textbf{SIAN-2}} & \multicolumn{1}{c}{(block-} & (comp- & \textbf{NODE-} &  \multicolumn{1}{|c}{\textbf{SIAN-1}} & \multicolumn{1}{c}{(block-} & (comp- & \textbf{NAM}   \\ 
 \multicolumn{2}{c||}{} & \multicolumn{1}{|c}{(default)} & \multicolumn{1}{c}{sparse)} & ressed) & \textbf{GA2M} & \multicolumn{1}{|c}{(default)} & \multicolumn{1}{c}{sparse)} & ressed) & \\
\hline\hline
\multirow{3}{*}{\begin{tabular}[c]{@{}l@{}}Training Time\\(minutes)\end{tabular}} & Wine &
 17.57  & \textbf{0.69} & --  & 55.19  & 
  3.03   & \textbf{0.66} & -- & 5.45 \\ 
& Bike 
& 159.65 & \textbf{5.38} & --  & 60.75 
& 25.65  & \textbf{4.58} & -- &  12.09  \\ 
& House 
 & 88.67  & \textbf{6.09} & --  & 58.03  
& 22.72  & \textbf{5.88} & -- &  14.74  \\ \hline
                                                                               \multirow{3}{*}{\begin{tabular}[c]{@{}l@{}}Size on Disk\\(KB)\end{tabular}}      & Wine
                                                                               
& 7,113   & 7,113 & \textbf{537}  & 1,040      
& 182    & 182  & \textbf{86} & 1,203  \\ 
 & Bike 
& 9,727   & 9,666 & \textbf{626}  & 1,040      
& 218    & 212  & \textbf{92} & 308 \\ 
& House 
& 1,526   & 1,526 & \textbf{251} & 1,040      
& 83     & 83   & \textbf{59} & 1,921 \\ \hline
\end{tabular}
} 
\end{table}

\subsection{Beyond ReLU Networks}
\label{sec:not_just_relu}
The FIS algorithm we describe can be applied to other combinations of FID algorithm + functional model besides Archipelago + ReLU Neural Networks.
To demonstrate the general applicability of our scheme, we replace the continuous, piecewise-linear functions of ReLU neural networks with the piecewise-constant, differentiable decision trees using NODE-GAM.
We extend the original implementation of NODE-GAM to handle feature triplets, extending the method to a trivariate function or GAM-3 model.
We run our FIS algorithm using $K=3$ on both the Housing and Wine datasets to fit GAM-3 models using the inductive biases of the NODE architecture.

Extending from NODE-GA2M to NODE-GA3M is able to improve performance from $0.222$ to $0.175$ on the Housing dataset, a $21.2\%$ further improvement over the state of the art method.
The same extension is only able to deliver a $4.8\%$ improvement from $0.521$ to $0.496$ on the Wine dataset; however, both the SIAN-5 and NODE-GA3M are able to improve performance over all previously available additive models.
These two examples demonstrate the ability of our FIS pipeline to be applied to more general machine learning techniques to model higher-order interactions.

\begin{figure}[h]
    \centering
    \includegraphics[width=1.\columnwidth]{{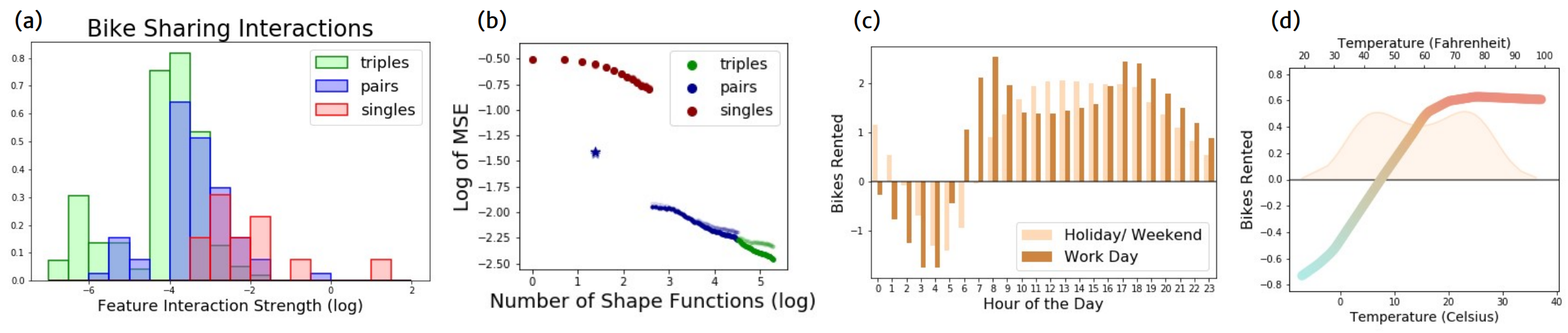}}
    \caption{
    \small
    Analysis of Bike Sharing. 
    (a) We plot the histogram of feature interaction strength, adjusting the vertical scaling by degree to display in a single plot.
    (b) We plot the training and validation performance (darker and lighter, respectively) as we add feature interactions to SIAN. We use an exponentially weighted moving average to reduce variance instead of training multiple networks of each size. The blue star is the performance of the shape functions in \ref{fig:bikeShare_interactionsAndHourXWorkday}c and \ref{fig:bikeShare_interactionsAndHourXWorkday}d.
    (c) The most important shape function showing the interaction between hour of the day and work day. (d) The next shape function showing temperature's effect on bikes rented.
    }
    \label{fig:bikeShare_interactionsAndHourXWorkday}
\end{figure}
\begin{figure}[h]\
    \centering
    \includegraphics[width=1.\columnwidth]{{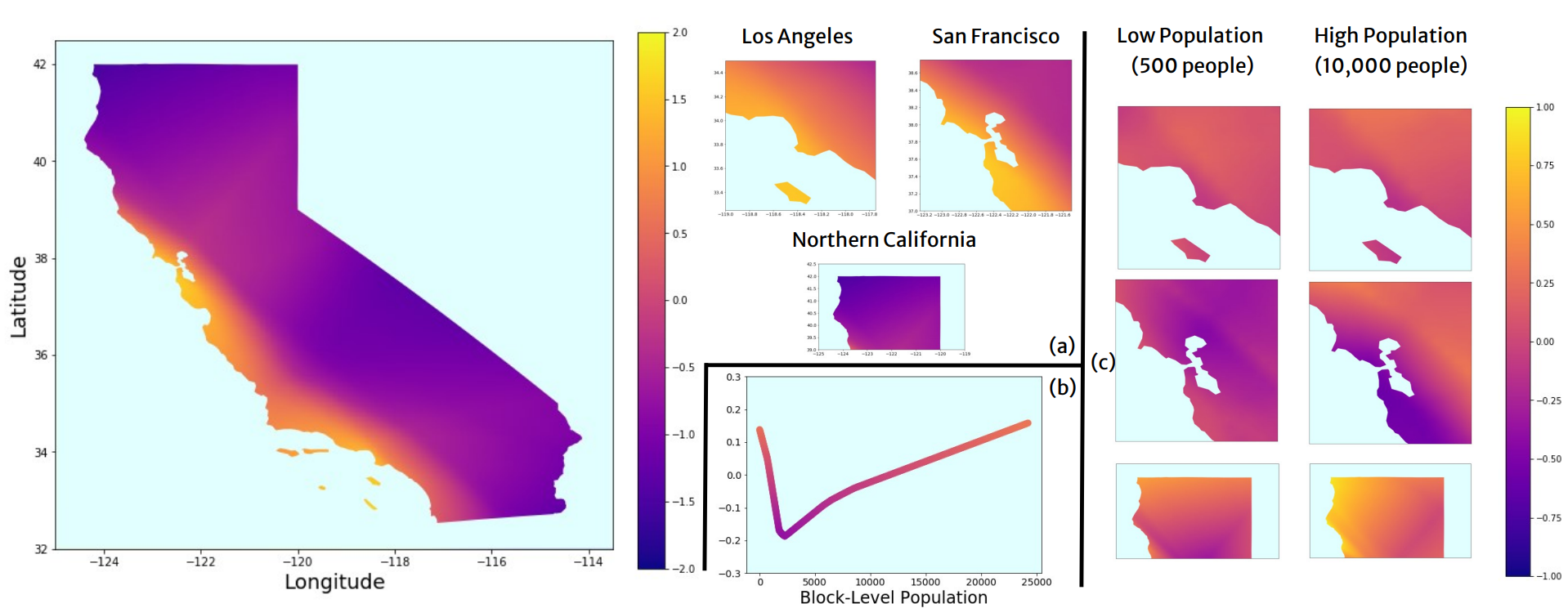}}
    \caption{
    \small
    3D feature interaction for California Housing. 
    (a) The effect of the location only for the entire state of California and zoomed in to three selected locations. (b) The effect of only the block-level population. (c) The three-dimensional interaction effect of block-level population with location.  We plot the effect for both a low and high population to highlight the differences in effect for each location. Note the different scales per panel.
    }
    \label{fig:caliHouse_threeDimensionalVisualization}
\end{figure}

\section{Discussion}
In this section, we further explore the feature interactions learned by SIAN and FIS across multiple datasets.
We provide multiple visualizations of the shape functions learned by SIAN to get a sense of the diverse analysis and interpretations which are made possible by additive models.
Further discussion and graphics are provided in Appendix \ref{app_sec:feature_interaction_selection_details} and \ref{app_sec:mimic_shape_visualizations}.

In Figure \ref{fig:bikeShare_interactionsAndHourXWorkday}a, we visualize the interaction strength measured for each of the singles, pairs, and triples of features from the Bike Sharing dataset.
The four top rated interactions (three of which are visually separated from the main body of the histogram) are, from right to left, [``hour'', ``hour X workday'', ``workday'', ``temperature''].
In \ref{fig:bikeShare_interactionsAndHourXWorkday}b we see the effect of gradually adding feature interactions to our model.
We see there is a steep jump in performance when we are able to model the first pair, depicted in \ref{fig:bikeShare_interactionsAndHourXWorkday}c.
Together with \ref{fig:bikeShare_interactionsAndHourXWorkday}d, these two visualizations alone explain 84\% of the variance in the Bike Sharing dataset.
Importantly, these two trends are interpretable and agree with our intuition about when people are more likely to bike:
on work days, we see peak spikes at 8 a.m. and 5 p.m., corresponding to the beginning and end of work hours; on weekends and holidays, there is a steady demand of bikes throughout the afternoon.
There are also more bikers during warmer temperatures.

In Figure \ref{fig:caliHouse_threeDimensionalVisualization}, we set out to visualize one of the three dimensional interactions which occurs between the three features of latitude, longitude, and population.
In \ref{fig:caliHouse_threeDimensionalVisualization}a, we see the rise in housing price along the coast of California, especially around the metropolitan areas of Los Angeles and San Francisco.
In \ref{fig:caliHouse_threeDimensionalVisualization}b, we see that house price does not monotonically increase with population as we might expect for higher population densities.
A detailed look into the dataset reveals that the `population' feature being used is accumulated at the census block level, creating an inverse relationship with population density as areas like LA and SF are subdivided more than their suburban and rural counterparts.
Although it is possible location and population density might have independent effects on housing price, the dataset nevertheless induces an interaction between location and population.
In \ref{fig:caliHouse_threeDimensionalVisualization}c, we attempt to visualize this 3D interaction by viewing subsamples across two population sizes and over three regions.
We see the network has learned to differentiate the urban regions of LA and SF from the rural coast of northern California, where high population becomes indicative of higher population densities and higher housing prices.
We note that the trends learned by SIAN tends to be very continuous, which is a potential shortcoming in learning fine-grained block-level information in cities like Los Angeles and San Francisco.
In light of these concerns, we run experiments on a three-dimensional extension of the piecewise constant, discontinuous NODE-GAM model in section \ref{sec:not_just_relu}.

\begin{wrapfigure}{R}{0.6\textwidth}
    \centering
    \includegraphics[width=0.29\textwidth]{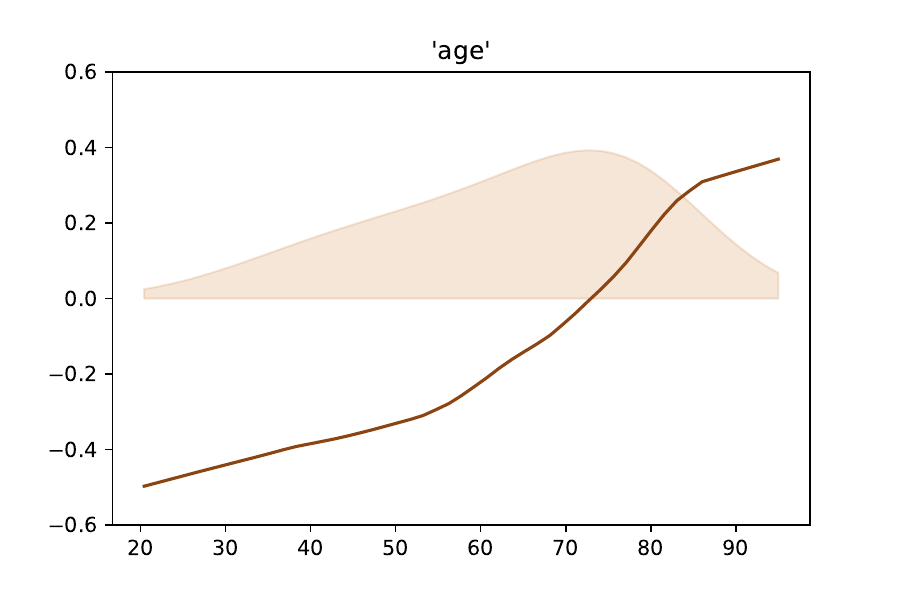}
    \centering
    \includegraphics[width=0.29\textwidth]{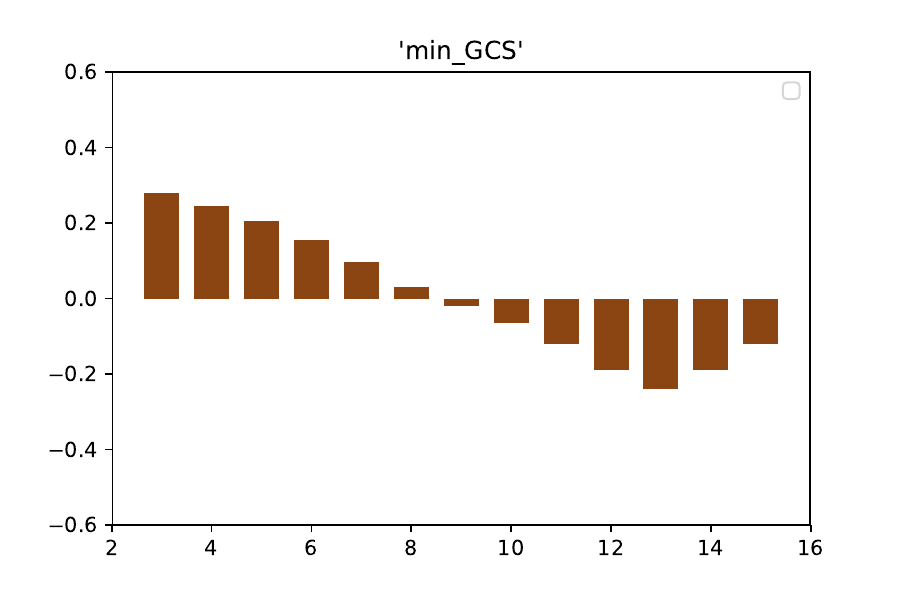}
    \caption{MIMIC-III. Age Risk and GCS Risk}
    \label{fig:mimiciii_shape_functions}
\end{wrapfigure}

In Figure \ref{fig:mimiciii_shape_functions}, we see two example trends from the SIAN trained to predict mortality given hospital data.
On the left, we see that health risk gradually increases with age.
On the right, we see the trend with respect to the Glasgow Coma Scale indicator which is a measure of alertness and consciousness.
The lowest and highest scores correspond to severe head injury and full alertness, respectively.
We see that very low GCS corresponds to high risk and that the risk decreases as GCS increases.
At the score of 13, however, we see that an increase to 14 or 15 actually increases mortality risk, defying the intuition that risk should be monotone in the GCS score.
It is highly likely that this dip in mortality risk comes from the special care given to patients with GCS scores below 15, compared to their counterparts with perfect scores.

This trend illustrates a phenomenon occurring throughout machine learning applications in healthcare where correlation and causation are conflated with one another.
Similar results have been found linking asthma to a decreased risk of death from pneumonia \cite{caruana2015intelligible}.
These issues might go unnoticed in black-box models whereas interpretable models can uncover and resolve such discrepancies before deployment.
While discovering and correcting interpretable trends brings clear advantages, it is also possible for these trends to be misinterpreted by non-experts as a causal relationship.
Such false causal discoveries can not only lead to physical harm in the domain of medicine, but also to larger social harm in broader AI systems and applications.

\section{Limitations and Future Work}
A primary limitation of the current work is its focus on multilayer perceptrons whereas modern state-of-the-art results are dominated by geometric deep learning and transformer architectures.
Extending this procedure to more general architectures is a key direction for bringing interpretability to domains like computer vision and natural language.
Such domains can further customize the FIS algorithm to respect specific structures like spatial locality and knowledge graph semantics.

Another important direction of research includes a better theoretical understanding how the benefits of SIAN scale with the dimensionality of the dataset and the number of available samples, providing yet another lens to study the implicit biases of deep neural networks.
Developing theory which accurately models the empirically observed distributions of feature interactions in real-world data, especially in the presence of heteroscedastic noise and correlated features, is of great interest for this direction.

Multiple experiments confirm that SIAN can produce powerful and interpretable machine learning models which match the performance of state-of-the-art benchmarks like deep neural networks and NODE-GA2M.
Further experiments show that FIS can be applied to more general machine learning algorithms.
Hopefully, future work will be able to further clarify this sparse interaction perspective and help deepen our understanding of the generalization performance of neural networks.

\begin{ack}
We would like to thank the anonymous reviewers for their helpful comments.
We are grateful for support for this work from the National Science Foundation (NSF) under grant CCF-1837131.
\end{ack}

\newpage

\bibliography{neurips_2022}
\bibliographystyle{abbrv}

\newpage
\section*{Checklist}

\begin{enumerate}

\item For all authors...
\begin{enumerate}
  \item Do the main claims made in the abstract and introduction accurately reflect the paper's contributions and scope?
    \answerYes{}
  \item Did you describe the limitations of your work?
    \answerYes{}
  \item Did you discuss any potential negative societal impacts of your work?
    \answerYes{}
  \item Have you read the ethics review guidelines and ensured that your paper conforms to them?
    \answerYes{}
\end{enumerate}

\item If you are including theoretical results...
\begin{enumerate}
  \item Did you state the full set of assumptions of all theoretical results?
    \answerNA{}
        \item Did you include complete proofs of all theoretical results?
    \answerNA{}
\end{enumerate}

\item If you ran experiments...
\begin{enumerate}
  \item Did you include the code, data, and instructions needed to reproduce the main experimental results (either in the supplemental material or as a URL)?
    \answerYes{}
  \item Did you specify all the training details (e.g., data splits, hyperparameters, how they were chosen)?
    \answerYes{}
        \item Did you report error bars (e.g., with respect to the random seed after running experiments multiple times)?
    \answerYes{}
        \item Did you include the total amount of compute and the type of resources used (e.g., type of GPUs, internal cluster, or cloud provider)?
    \answerYes{}
\end{enumerate}

\item If you are using existing assets (e.g., code, data, models) or curating/releasing new assets...
\begin{enumerate}
  \item If your work uses existing assets, did you cite the creators?
    \answerYes{}
  \item Did you mention the license of the assets?
    \answerNA{}
  \item Did you include any new assets either in the supplemental material or as a URL?
    \answerYes{}
  \item Did you discuss whether and how consent was obtained from people whose data you're using/curating?
    \answerNA{}
  \item Did you discuss whether the data you are using/curating contains personally identifiable information or offensive content?
    \answerNA{}
\end{enumerate}

\item If you used crowdsourcing or conducted research with human subjects...
\begin{enumerate}
  \item Did you include the full text of instructions given to participants and screenshots, if applicable?
    \answerNA{}
  \item Did you describe any potential participant risks, with links to Institutional Review Board (IRB) approvals, if applicable?
    \answerNA{}
  \item Did you include the estimated hourly wage paid to participants and the total amount spent on participant compensation?
    \answerNA{}
\end{enumerate}

\end{enumerate}


\appendix

\newpage
\section{Theoretical Discussion}
\label{app_sec:theoretical_discussion}

\subsection{Introduction with pseudo-boolean function}
Let us first discuss a pseudo-boolean function of the form $f : \{-1,1\}^d \to \bbR$.
Every such function has a unique decomposition as a multilinear polynomial:
\[
f(x) = \sum_{I\subseteq[d]} c_I x_I
\]
where $x_I$ is the monomial $x_I = \prod_{i\in I} x_i$.
This multilinear decomposition is usually referred to as the Fourier expansion of the function $f$ and the $c_I$ are called the Fourier coefficients.
Instead of estimating the $2^d$ functional values, we can instead estimate the $2^d$ Fourier coefficients.
Although there is no immediate benefit to this shift in perspective, an abundance of practical assumptions like submodularity, disjunctive normal forms, low-order approximates, and decaying Fourier spectrum can be made to imply that only a sparse subset of the full $2^d$ coefficients are worth estimating.
A further discussion of these assumptions in the context of boolean functions can be found in \cite{donnel2008someTopicsBooleanFunctions,stobbe2012sparseFourierSet,raskhodnikova2012pseudoBooleanKDNF,odonnel2021booleanFunctionBook}.

For our context, it suffices to say that only $s<<2^d$ of the full $c_I$ coefficients are nonzero.
Our problem hence becomes a sparse selection problem in the space of Fourier coefficients, which can be equivalently thought of as a Feature Interaction Selection (FIS) problem in the functional space of pseudo-boolean functions.

\begin{lemma}
Given exact functional values, Archipelago recovers the upper set of Fourier coefficients.
Specifically, given a subset $A\subseteq[d]$ and defining:
\[
\cA_A(f) :=  \bbE_{x,y} \bigg[\bigg(\sum_{C\subseteq A} (-1)^{|C|-|A|} f(x_{C}+y_{C'}) \bigg)^2 \bigg] \cdot \frac{1}{2^{|A|}}
\]
We have that the upper cone set of $A$ is recovered by the Archipelago measure:
\[
\cA_A(f) = \sum_{I\supseteq A} |c_I|^2
\]
\end{lemma}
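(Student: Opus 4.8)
The plan is to compute the inner "discrete difference" operator acting on the Fourier expansion of $f$ and show that it annihilates exactly the coefficients $c_I$ with $I \not\supseteq A$, leaving a clean expression whose expected square is the claimed sum. First I would substitute the Fourier expansion $f(z) = \sum_{I \subseteq [d]} c_I z_I$ into the inner sum, writing $z = x_C + y_{C'}$ where $C' = A \setminus C$ (so on coordinates in $A$ we use $x$ where $i \in C$ and $y$ otherwise, and on coordinates outside $A$ we presumably always use $x$ — I would check the intended convention from the $\mathrm{Archipelago}$ definition in the appendix, but the natural reading is that coordinates in $[d]\setminus A$ are held at their $x$-value). Then $z_I = x_{I\cap C}\, y_{I\cap(A\setminus C)}\, x_{I\setminus A}$, and the key step is to evaluate
\[
\sum_{C\subseteq A} (-1)^{|C|-|A|}\, z_I
= x_{I\setminus A}\sum_{C\subseteq A} (-1)^{|C|-|A|} x_{I\cap C}\, y_{I\cap(A\setminus C)}.
\]
Splitting the sum over $C$ into a choice on $I \cap A$ and a free choice on $A \setminus I$, the free part contributes $\sum_{D\subseteq A\setminus I}(-1)^{|D|} = 0$ unless $A \setminus I = \emptyset$, i.e. unless $A \subseteq I$. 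This is the heart of the argument: the discrete difference over the block $A$ kills every monomial not divisible by $x_A$.

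When $A \subseteq I$, the part of the sum over $C \cap (A\setminus I)$ is trivial and one is left with $\sum_{C\subseteq A}(-1)^{|C|-|A|} x_C y_{A\setminus C} = \prod_{i\in A}(x_i - y_i)$, so the inner sum collapses to
\[
\sum_{I\supseteq A} c_I\, x_{I\setminus A}\prod_{i\in A}(x_i - y_i).
\]
Next I would take the square and the expectation $\bbE_{x,y}$ with $x,y$ independent uniform on $\{-1,1\}^d$. Here I use orthonormality of the characters: $\bbE[x_{J} x_{J'}] = \mathbf{1}_{J=J'}$, and since $\prod_{i\in A}(x_i-y_i)$ has constant contribution I would factor $\bbE\big[\prod_{i\in A}(x_i-y_i)^2\big] = \prod_{i\in A}\bbE[(x_i-y_i)^2] = 2^{|A|}$, using independence of $x$ and $y$; the cross terms $x_{I\setminus A}$ and $x_{I'\setminus A}$ for $I \neq I'$ vanish by orthogonality (and are independent of the $A$-block factor). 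Combining, $\bbE_{x,y}[(\cdot)^2] = 2^{|A|}\sum_{I\supseteq A} |c_I|^2$, and multiplying by $2^{-|A|}$ gives the claim.

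The main obstacle I anticipate is purely bookkeeping rather than conceptual: pinning down the exact convention of the $\mathrm{Archipelago}$ operator on coordinates outside $A$ (whether they are set to $x$, to $y$, or marginalized), and making sure the independence/orthogonality factorization of $\bbE_{x,y}\big[x_{I\setminus A}\,x_{I'\setminus A}\prod_{i\in A}(x_i-y_i)(x_i'-y_i')\big]$ — wait, there is only one $x,y$ pair, so this is really $\bbE_{x,y}\big[x_{I\setminus A}\,x_{I'\setminus A}\prod_{i\in A}(x_i-y_i)^2\big]$ — cleanly separates into the $A$-block and its complement. Once the convention is fixed, every step above is a short computation with geometric sums and second moments of Rademacher variables.
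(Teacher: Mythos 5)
Your proof is correct, and it takes a genuinely cleaner route than the paper's. The paper expands the square first into a double sum over pairs $(C_1, C_2)$, then expands the Fourier series, then does three rounds of case elimination (showing the outside-of-$A$ index sets $K_1, K_2$ must agree, then that the inside-of-$A$ index sets $J_1, J_2$ must agree, and finally a bijection argument to kill $\sum_{C_1,C_2 : (C_1\Delta C_2)\subseteq A\setminus J}(-1)^{|C_1|+|C_2|}$ unless $J=A$). You instead collapse the inner sum before squaring, using the identity
\[
\sum_{C\subseteq A}(-1)^{|C|-|A|} x_{I\cap C}\,y_{I\cap(A\setminus C)}
= \Bigl(\sum_{D\subseteq A\setminus I}(-1)^{|D|}\Bigr)\cdot\Bigl(\sum_{C\subseteq I\cap A}(-1)^{|C|-|A|}x_C y_{(I\cap A)\setminus C}\Bigr),
\]
so the difference operator annihilates any monomial with $A\not\subseteq I$, and for $A\subseteq I$ the surviving factor is exactly $\prod_{i\in A}(x_i-y_i)$. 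After that, the squared expectation is a one-line computation: $\bbE_{x_{A^c}}[x_{I\setminus A}x_{I'\setminus A}]=\mathbf{1}_{I=I'}$ by orthogonality, and $\bbE_{x_A,y_A}\bigl[\prod_{i\in A}(x_i-y_i)^2\bigr]=2^{|A|}$ by independence, and these two factors separate because they involve disjoint coordinate blocks. This is the same calculation in substance, but telescoping the difference operator to a product form before squaring buys you all three of the paper's case-splits at once.

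On the one point you flagged: in the paper's proof $C'$ is the full complement $[d]\setminus C$, so the coordinates outside $A$ are held at $y$, not $x$ as you assumed. Since $x$ and $y$ are iid uniform and your argument only uses the inside-$A$ block for the telescoping and the outside-$A$ block for orthogonality, swapping the label is immaterial and the proof goes through unchanged.
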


\begin{proof}
\small
\begin{align*}
& & &
\bbE_{x,y} \bigg[\bigg(\sum_{C\subseteq A} (-1)^{|C|-|A|} f(x_{C}+y_{C'}) \bigg)^2 \bigg]
    \\
&&=\hspace{0.5em}&
\bbE_{x,y} \bigg[\sum_{C_1\subseteq A}\sum_{C_2\subseteq A} (-1)^{|C_1|-|A|} (-1)^{|C_2|-|A|} f(x_{C_1}+y_{C_1'}) f(x_{C_2}+y_{C_2'}) \bigg] &\\
&&=\hspace{0.5em}&
\bbE_{x,y} \bigg[\sum_{C_1\subseteq A}\sum_{C_2\subseteq A} (-1)^{|C_1|+|C_2|} \sum_{I_1\subseteq [d]}\sum_{I_2\subseteq [d]}  c_{I_1} c_{I_2} \cdot (x_{C_1}+y_{C_1'})_{I_1} \cdot (x_{C_2}+y_{C_2'})_{I_2}  \bigg] \\
&&=\hspace{0.5em}&
\bbE_{x,y} \bigg[\sum_{C_1\subseteq A}\sum_{C_2\subseteq A} (-1)^{|C_1|+|C_2|} \sum_{J_1\subseteq A}\sum_{J_2\subseteq A}\sum_{K_1\subseteq A'}\sum_{K_2\subseteq A'}  c_{I_1} c_{I_2} \cdot (x_{C_1}+y_{C_1'})_{I_1} \cdot (x_{C_2}+y_{C_2'})_{I_2}  \bigg] \\
\end{align*}
where $I_1=J_1\cup K_1$ and $I_2=J_2\cup K_2$.
We also write $B_1=A-C_1$ and $B_2=A-C_2$.
Continuing:
\scriptsize
\begin{align*}
&=\hspace{.5em}&
\sum_{K_1\subseteq A'}\sum_{K_2\subseteq A'} 
\bbE_{x,y} \bigg[\sum_{C_1\subseteq A}\sum_{C_2\subseteq A} \sum_{J_1\subseteq A}\sum_{J_2\subseteq A} 
(-1)^{|C_1|+|C_2|} c_{I_1} c_{I_2} \cdot (x_{C_1}+y_{B_1}+y_{A'})_{I_1} \cdot (x_{C_2}+y_{B_2}+y_{A'})_{I_2}  \bigg] \\
&=\hspace{.5em}&
\sum_{K\subseteq A'}
\bbE_{x,y} \bigg[\sum_{C_1\subseteq A}\sum_{C_2\subseteq A} \sum_{J_1\subseteq A}\sum_{J_2\subseteq A} 
(-1)^{|C_1|+|C_2|} c_{I_1} c_{I_2} \cdot (x_{C_1}+y_{B_1}+y_{A'})_{I_1} \cdot (x_{C_2}+y_{B_2}+y_{A'})_{I_2}  \bigg] \\
&+\hspace{.5em}&
\sum_{K_1\neq K_2\subseteq A'}
\bbE_{x,y} \bigg[\sum_{C_1\subseteq A}\sum_{C_2\subseteq A} \sum_{J_1\subseteq A}\sum_{J_2\subseteq A}
(-1)^{|C_1|+|C_2|} c_{I_1} c_{I_2} \cdot (x_{C_1}+y_{B_1}+y_{A'})_{I_1} \cdot (x_{C_2}+y_{B_2}+y_{A'})_{I_2}  \bigg]
\end{align*}
\small 
We know that the second summand will have every term equal to zero.
Without loss of generality, let $k_1$ belong to $K_1$ but not $K_2$ which we can always find such an element since the two sets are not equal.
We can then factor out an independent term of $y_{k_1}$ and take the expectation over these independent quantities:
\[
\sum_{K_1\neq K_2\subseteq A'}
\bbE_{x,y} \bigg[\sum_{C_1\subseteq A}\sum_{C_2\subseteq A} \sum_{J_1\subseteq A}\sum_{J_2\subseteq A}
(-1)^{|C_1|+|C_2|} c_{I_1} c_{I_2} \cdot (x_{C_1}+y_{B_1}+y_{A'})_{I_1} \cdot (x_{C_2}+y_{B_2}+y_{A'})_{I_2}  \bigg]
=
\]\[
\sum_{K_1\neq K_2\subseteq A'} \bbE_{y_{k_1}} [y_{k_1}] \cdot 
\bbE_{x,y-y_{k_1}} \bigg[ \dots  \bigg]
=
\sum_{K_1\neq K_2\subseteq A'}
0 \cdot \bbE_{x,y-y_{k_1}} \bigg[ \dots  \bigg]
=
0.
\]
Hence, we can now focus on a given $K\subseteq A'$ and consider what happens to:
\[
\bbE_{x,y} \bigg[\sum_{C_1\subseteq A}\sum_{C_2\subseteq A} \sum_{J_1\subseteq A}\sum_{J_2\subseteq A} 
(-1)^{|C_1|+|C_2|} c_{(J_1+K)} c_{(J_2+K)} \cdot (x_{C_1}+y_{B_1})_{J_1} \cdot (x_{C_2}+y_{B_2})_{J_2}  \bigg]
\]
Following a similar argument, we can partition into two summands consisting of $\{J_1=J_2\}$ and $\{J_1\neq J_2\}$.
Without loss of generality, pick one such $j_1$ in $J_1$ but not in $J_2$.
For each choice of $C_1,C_2$, there is a choice of either $x$ or $y$ which can be factored out from one term and not the other, by assumption there is no $j_1\in J_2$.
Without loss of generality, let it be $x$.
\[
\sum_{J_1\neq J_2\subseteq A}
\bbE_{x,y} \bigg[\sum_{C_1\subseteq A}\sum_{C_2\subseteq A} 
(-1)^{|C_1|+|C_2|} c_{I_1} c_{I_2} \cdot (x_{C_1}+y_{B_1})_{J_1} \cdot (x_{C_2}+y_{B_2})_{J_2} \bigg]
=
\]
\[
\sum_{J_1\neq J_2\subseteq A} \bbE_{x_{j_1}} [x_{j_1}] \cdot 
\bbE_{x-x_{j_1},y} \bigg[ \dots  \bigg]
=
\sum_{J_1\neq J_2\subseteq A}
0 \cdot \bbE_{x-x_{j_1},y} \bigg[ \dots  \bigg]
=
0
\]

Now, we are focusing on only a specific $J$ and $K$, giving:
\[
 c_{(J+K)}^2 \cdot
\sum_{C_1\subseteq A}\sum_{C_2\subseteq A}
\bbE_{x,y} \bigg[
(-1)^{|C_1|+|C_2|} (x_{C_1}+y_{B_1})_{J} \cdot (x_{C_2}+y_{B_2})_{J}  \bigg]
\]

Writing the symmetric difference as $(C_1 \Delta C_2) := (C_1-C_2)\cap (C_2-C_1)$, our quantity is equal to:
\[
 c_{(J+K)}^2 \cdot
\sum_{C_1\subseteq A}\sum_{C_2\subseteq A}
(-1)^{|C_1|+|C_2|} \cdot
\bbE_{x,y} \bigg[
 \prod_{j\in(J\cap(C_1 \Delta C_2))} x_j y_j \bigg]
\]

This expectation will be equal to zero if the product is nonempty, following the same logic as before by taking the expectation over one of the independent $x_j$ or $y_j$.
Otherwise, the expectation will be equal to one.
Hence, it is only equal to one for the terms in which $(J\cap(C_1 \Delta C_2))=\emptyset$ and our quantity of interest is equal to:
\[
 c_{(J+K)}^2 \cdot
\sum_{C_1,C_2\subseteq A : (C_1\Delta C_2)\subseteq(A-J)}
(-1)^{|C_1|+|C_2|}
\]
This sum will, however, be equal to zero unless $J=A$.
Suppose instead that there is an $a\in A-J$.
Consider the mapping from pairs $(C_1,C_2)$ where $a\notin C_1$ to the pairs which have $a\in C_1$.
These sets clearly partition $\{(C_1,C_2)\subseteq A\times A : (C_1\Delta C_2)\subseteq(A-J) \}$ and the mapping is bijective inside the subsets.
Hence, the two sets are of the same size and with odd parity of $(-1)^{|C_1|+|C_2|}$.
Together, this implies the sum above will be equal to zero for such cases.
In the alternate case that $J=A$, we know that $(C_1\Delta C_2)=\emptyset$ or $C_1=C_2$, meaning we instead consider $ c_{(J+K)}^2 \cdot \sum_{C\subseteq A} 1 = c_{(J+K)}^2 \cdot 2^{|A|}$.

Altogether, our original quantity of interest has now become:
\[
\sum_{K\subseteq A'} \sum_{J=A} c_{(J+K)}^2 \cdot 2^{|A|}
=
2^{|A|} \cdot \sum_{K\subseteq A'} c_{(A+K)}^2 
=
2^{|A|} \cdot \sum_{I\supseteq A} c_{I}^2 
\]

\end{proof}

\normalsize
\begin{proposition}
The FIS Algorithm \ref{alg:feature_interaction_selection} recovers the downwards closure of the true nonzero entries of the function. 
Let $S\subseteq \cP([d])$ be the set of nonzero Fourier coefficients of size $|S|=s$.
Define the closure or downwards closure of $S$ as $\Bar{S}:=\{I\subseteq[d] : I\subseteq J \hspace{.5em}\text{for some } J\in S\}$.
Algorithm \ref{alg:feature_interaction_selection} returns exactly $\Bar{S}$.
\end{proposition}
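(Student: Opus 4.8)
The plan is to track the two data structures maintained by Algorithm~\ref{alg:feature_interaction_selection} --- the accumulator $\cI$ and the candidate frontier $\cJ$ --- layer by layer in $k$, and show by induction on $k$ that after processing level $k$ we have $\cI \cap \{J : |J| \le k\} = \Bar{S} \cap \{J : |J| \le k\}$, and that the frontier $\cJ$ entering level $k+1$ is exactly the set of $(k+1)$-subsets all of whose $k$-subsets lie in $\Bar{S}$. The engine driving the induction is the preceding Lemma: with exact functional values, $\cA_J(f) = \sum_{I \supseteq J} |c_I|^2$, so $\omega_J(X^*) > \theta$ holds (taking $\theta$ below the smallest nonzero $|c_I|^2$, or simply $\theta = 0$ in the exact regime, which is the setting the statement implicitly assumes) precisely when there exists $I \supseteq J$ with $c_I \ne 0$ --- that is, precisely when $J \in \Bar{S}$. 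So the \texttt{if} test on line~12 is an exact oracle for membership in $\Bar{S}$, \emph{provided} $J$ ever reaches the frontier to be tested.

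The substance of the argument is therefore the interplay between this oracle and the heredity filter on lines 16--20. First I would establish the base case: at $k=1$, $\cJ$ is all singletons $\{i\}$, every one of which is tested, and $\{i\} \in \Bar{S}$ iff $i$ belongs to some set of $S$; so after level $1$, $\cI$ is exactly the singletons in $\Bar{S}$. For the inductive step, assume $\cI$ currently equals $\Bar{S}$ restricted to sizes $\le k$, and $\cJ$ (the sets of size $k$ just processed) equals $\Bar{S}$ restricted to size $k$ (these coincide because by downward-closedness every size-$k$ set in $\Bar{S}$ is reached and passes the oracle). Lines 16--18 form $\cJ'$ as all $(k+1)$-subsets and compute $\nu(J) = \frac{1}{|J|}\#\{I \subset J : |J|-|I|=1,\ I \in \cI\}$, i.e. the fraction of the $|J|$ codimension-one subsets of $J$ that landed in $\cI$. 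Line~19 keeps $J$ iff $\nu(J) > \tau$. With $\tau = \tfrac12$ (the value used throughout the paper) I would show: if $J \in \Bar{S}$ then every $(k)$-subset of $J$ is also in $\Bar{S}$ hence in $\cI$, so $\nu(J) = 1 > \tau$ and $J$ survives; conversely if $J \notin \Bar{S}$ then $\nu(J)$ counts only those facets of $J$ that happen to lie in $\Bar{S}$, and one must argue this count cannot exceed $\tau |J| = |J|/2$.

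That converse is the one real obstacle, and it is where ``strong heredity holds for our feature-interaction definition'' (equivalently: $S$ is an antichain of maximal nonzero coefficients, with $\Bar S$ its down-closure, and the Fourier support of a multilinear polynomial satisfies exact strong heredity) must be invoked carefully. The clean way: if $J \notin \Bar S$ but two distinct facets $I_1, I_2$ of $J$ (of size $k$) both lie in $\Bar S$, then $I_1 \cup I_2 = J$ (two distinct $k$-subsets of a $(k{+}1)$-set always union to the whole set), so $J$ would be contained in... nothing forces $J \in \bar S$ directly, so instead I argue contrapositively at the level of the algorithm's correctness claim: it suffices that \emph{no element outside $\Bar S$ ever survives the filter}, and if at most one facet of $J$ is in $\cI$ then $\nu(J) \le 1/|J| \le 1/2 = \tau$ fails the strict inequality, excluding $J$. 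So the remaining case to rule out is ``$J \notin \Bar S$ yet $\ge 2$ facets of $J$ are in $\Bar S$.'' Here I would note that this case \emph{can} occur combinatorially (e.g. $S = \{\{1,2\},\{1,3\},\{2,3\}\}$, $J = \{1,2,3\}$), so the Proposition as stated cannot be literally true for arbitrary $S$ without an additional hypothesis --- and I would resolve this by pointing out that in that situation $J$ survives to be tested by the oracle, which correctly rejects it since $\cA_J(f) = 0$; \textbf{the heredity filter only prunes the candidate list, and anything it erroneously admits is still vetted by the strength threshold on line~12 before entering $\cI$}. Thus the correct invariant is: $\cI$ always equals $\Bar S$ restricted to the sizes processed so far (by the oracle), while $\cJ$ is a superset of $\Bar S$ at each size (the filter never drops a genuine member, since genuine members have $\nu = 1$) that is still small enough for tractability. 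Chaining this up to $k = K$, and observing that $\Bar S$ has no elements of size exceeding $\max_{T \in S}|T| \le K$ whenever $K$ is taken at least that large (the regime of the statement), gives $\cI = \Bar S$ exactly. I would write the final proof around this two-part invariant, with the Lemma supplying the oracle half and a short lemma on $\nu$ supplying the ``never drops a member'' half.
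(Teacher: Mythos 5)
Your proof is correct and rests on the same foundation as the paper's: the preceding Lemma turns $\omega_J(X^*) > \theta$ (for $\theta$ small enough, in the exact-value regime) into an exact membership oracle for $\bar{S}$, and the rest is a layer-by-layer trace of the algorithm. The paper's own proof is two sentences and elides the one genuine subtlety you handle carefully --- that a set $J \in \bar{S}$ must actually reach the frontier $\cJ$ in order to be tested by the oracle at all, which requires showing the heredity filter never drops a true member of $\bar{S}$. Your resolution (every facet of $J$ is also in $\bar{S}$ hence already in $\cI$, so $\nu(J)=1>\tau$) is exactly the missing step, and your two-part invariant (``$\cI$ is exactly $\bar S$ at processed sizes, $\cJ$ is merely a superset of the size-$k$ slice of $\bar S$'') is the right way to phrase why the filter can be lossy on the spurious side without harming exactness, since anything like your $\{1,2,3\}$ example that slips through heredity is still vetoed by the strength threshold. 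So same approach, but a materially more complete argument than the paper's ``it fairly simply follows.''
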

\begin{proof}
Using the lemma above, we know that the measurement on the psuedo-boolean function will be the upper cone.
From this it fairly simply follows that any member of the downwards closure of $S$ will have positive upper cone size and will be added to the candidate set of interactions, whereas nonmembers of the downwards closure will all have zero upper cone size and will not be added.
\end{proof}

\subsection{Decomposition of a general function}
Given a more general space $\cX\subseteq\bbR^d$ and a function $f:\cX\to \bbR$ we can define another decomposition similar to the Fourier decomposition from before.
Given some fixed distribution on our input space $x\sim\cP(\cX)$, we can define the conditional expectation functions as:
\[
\Tilde{f}_\cI (x_\cI) = \bbE_{x_{\setminus\cI}} [ f(x_\cI + x_{\setminus\cI})]
\]
and inductively define $f_\cI(x_\cI)$ with:
\[
\sum_{\cJ\subseteq\cI} f_\cJ(x_\cJ)  = \Tilde{f}_\cI(x_\cI)
\]

We can now decompose our original function $f$ as:
\[
f(x) = \sum_{\cI\subseteq [d]} f_\cI (x_\cI)
\]
where each of these $2^d$ functions corresponds to a subset $\cI$ of the entire feature space which focuses only on these sets of features.

Moreover, we see that these functions are orthogonal with respect to the distribution.
\[
\bbE_{x}\bigg[ \big|f(x) \big|^2 \bigg] = \sum_{\cI\subseteq[d]} \bbE_{x}\bigg[  \big|f_\cI (x_\cI) \big|^2 \bigg]
\]
We can more enticingly write this as:
\[
\|f\|^2 = \sum_{\cI\subseteq[d]} \|f_\cI\|^2
\]

\paragraph{Example} For instance, the function $f_\emptyset$ corresponds to the average value of the function f given the input distribution.
The function $f_{x_1}$ corresponds to the expectation of the function conditioned on knowing variable $x_1$ minus the mean value.
As a running example, let's consider $f(x,y) = (1 + x + xy)$ using the uniform distribution over the square $x,y\sim U(-1,1)^2$.
We will have $\Tilde{f}_\emptyset = 1$, $\Tilde{f}_{x} = 1 + x$,  $\Tilde{f}_{y} = 1$,  and $\Tilde{f}_{x,y} = 1 + x + xy$, so that $f_\emptyset = 1$, $f_{x} = x$,  $f_{y} = 0$,  $f_{x,y} = xy$.
In our example this means that:
\begin{multline*}
\bbE_{x,y}[ (f(x,y))^2 ] = \bbE [ 1^2 ] + \bbE_{x}[ x^2 ] + \bbE_{y}[ 0^2 ] + \bbE_{x,y}[ (xy)^2 ]
  = 1 + \frac{1}{3} + 0 + \frac{1}{9} = \frac{13}{3}
\end{multline*}

\bigskip
\bigskip
Formally, let us assume that our function $f$ is measurable, $L^1$ and $L^2$ integrable with respect to our probability measure, and that additionally  each such subfunction is well defined and similarly integrable for all product measures.
In this case it should be that all of the following expectations (and integrations) that we write are well defined and exchangeable.
Under these assumptions, we have the sum decomposition as just written above.

\begin{proof}
\small
Let us first note that:
\[
\|f\|^2 = \bbE[ \sum_{I_1\subseteq[d]} \sum_{I_2\subseteq[d]} f_{I_1} f_{I_2} ] = \sum_{I_1\subseteq[d]} \sum_{I_2\subseteq[d]}  \bbE[ f_{I_1} f_{I_2} ] 
\]
and consider arbitrary $I_1,I_2\subseteq[d]$.
Let us define 

\noindent
$K=I_1\cap I_2$, $K_1 = I_1\setminus I_2$, $K_2 = I_2\setminus I_1$, $L=I_1^C\cap I_2^C$.

Further, we see that:
\[
 \bbE[ f_{I_1} f_{I_2} ] = \bbE[ \sum_{A_1\subseteq K,B_1\subseteq K_1} \sum_{A_2\subseteq K,B_2\subseteq K_2} (-1)^{\epsilon} \Tilde{f}_{A_1\cup B_1} \Tilde{f}_{A_2\cup B_2} ]
\]
\[
 \text{where} \quad \epsilon = (|I_1|+|I_2|-|A_1|-|B_1|-|A_2|-|B_2|)
 \]
 
Let us further focus on 
\[
 \bbE[ \Tilde{f}_{A_1\cup B_1} \Tilde{f}_{A_2\cup B_2} ] =  \bbE\bigg[  \bbE_{\setminus(A_1\cup B_1)}[f] \cdot \bbE_{\setminus(A_2\cup B_2)}[f] \bigg]
\]

We are now able to take $E=A_1\cap A_2$ and we see that our outside expectation can be reduced to an expectation over $E$ because for each feature not in the set $E$, at least one of the functional terms in the product  is constant with respect to that variable.
In more detail, anything outside of $(A_1\cup B_1)$ will have the first term be constant and anything outside of $(A_2\cup B_2)$ will have the second term be constant.
With respect to any of these variables, we are able to use the linearity of expectation to show that our reduced expectation will have the same value as the original expectation.
Consequently, we may reduce this expectation except for the variables in $(A_1\cup B_1)\cap(A_2\cup B_2) = (A_1\cap A_2) = E$ and we have that our expectation reduces to:
\[
\bbE_E[  \bbE_{\setminus(E)}[f] \cdot \bbE_{\setminus(E)}[f] ] = \bbE_E[  (\bbE_{\setminus E}[f]^2) ]
\]

In greater detail, we have that when we define $A_1'=K\setminus A_1$, $A_2'=K\setminus A_2$, $B_1'=K_1\setminus B_1$, $B_2'=K_2\setminus B_2$, and  $E'=K\setminus E$ that:
\begin{align*}
 \hspace{.5em}&\bbE\bigg[  \bbE_{\setminus(A_1\cup B_1)}[f] \cdot \bbE_{\setminus(A_2\cup B_2)}[f] \bigg]  \\
=\hspace{.5em}&\bbE_{A_1,B_1,A_2,B_2} \bigg[  \bbE_{A_1',B_1',K_2,L}[f] \cdot \bbE_{A_2',K_1,B_2',L}[f] \bigg]  \\
=\hspace{.5em}&\bbE_{E,A_1\setminus E,A_2\setminus E} \bigg[ 
\bbE_{B_1,B_2} \bigg[  \bbE_{A_1',B_1',K_2,L}[f] \cdot \bbE_{A_2',K_1,B_2',L}[f] \bigg] \bigg]  \\
=\hspace{.5em}&\bbE_{E,A_1\setminus E,A_2\setminus E} \bigg[ 
\bbE_{B_1} \bigg[ \bbE_{A_1',B_1',K_2,L}[f] \cdot \bbE_{B_2} \bigg[   \bbE_{A_2',K_1,B_2',L}[f] \bigg] \bigg] \bigg]  \\
=\hspace{.5em}&\bbE_{E,A_1\setminus E,A_2\setminus E} \bigg[ 
\bbE_{B_1} \bigg[ \bbE_{A_1',B_1',K_2,L}[f] \bigg] \cdot   \bbE_{A_2',K_1,K_2,L}[f]  \bigg]  \\
=\hspace{.5em}&\bbE_{E} \bigg[ 
\bbE_{A_1\setminus E,A_2\setminus E} \bigg[ 
\bbE_{A_1',K_1,K_2,L}[f]  \cdot   \bbE_{A_2',K_1,K_2,L}[f]  \bigg] \bigg]  \\
=\hspace{.5em}&\bbE_{E} \bigg[ 
\bbE_{A_1\setminus E} \bigg[ \bbE_{A_1',K_1,K_2,L}[f]  \cdot 
\bbE_{A_2\setminus E} \bigg[ 
  \bbE_{A_2',K_1,K_2,L}[f]  \bigg] \bigg] \bigg]  \\
=\hspace{.5em}&\bbE_{E} \bigg[ 
\bbE_{A_1\setminus E} \bigg[ \bbE_{A_1',K_1,K_2,L}[f]  \bigg] \cdot 
  \bbE_{E',K_1,K_2,L}[f]   \bigg]  \\
=\hspace{.5em}&\bbE_{E} \bigg[  \bbE_{E',K_1,K_2,L}[f] \cdot 
  \bbE_{E',K_1,K_2,L}[f]   \bigg] \\
=\hspace{.5em}&\bbE_{E} \bigg[  \bigg( \bbE_{\setminus E}[f] \bigg)^2   \bigg]
\end{align*}
where each step is done by the linearity of expectation and the law of total expectation.

Since this is true for any $I_1,I_2,A_1,A_2,B_1,B_2$, we now focus on entire summation.
Let us assume that $I_1\neq I_2$ so that we will have that either $K_1$ or $K_2$ is nonempty.
Let's assume without loss of generality that it is $K_1$ which is nonempty.
It then follows that 
\[
\sum_{B_1\subseteq K_1} (-1)^{|K_1|-|B_1|} \bbE\bigg[ \Tilde{f}_{A_1\cup B_1} \Tilde{f}_{A_2\cup B_2} \bigg] =
\]
\[
\sum_{B_1\subseteq K_1} (-1)^{|K_1|-|B_1|} \bbE_E\bigg[ \Tilde{f}_{\setminus E}^2 \bigg]
= 
\]
\[\bbE_E\bigg[ \Tilde{f}_{\setminus E}^2 \bigg] \cdot \sum_{B_1\subseteq K_1} (-1)^{|K_1|-|B_1|}
= \bbE_E\bigg[ \Tilde{f}_{\setminus E}^2 \bigg] \cdot 0 = 0
\]
Altogether, we can go back to our original reformulation to get our desired result:
\[
\|f\|^2 = \bbE[ \sum_{I_1\subseteq[d]} \sum_{I_2\subseteq[d]} f_{I_1} f_{I_2} ] = \sum_{I_1\subseteq[d]} \sum_{I_2\subseteq[d]}  \bbE[ f_{I_1} f_{I_2} ] =
\]
\[
\sum_{I\subseteq[d]}  \bbE[ f_{I} f_{I} ]  +  \sum_{I_1\neq I_2\subseteq[d]} \bbE[ f_{I_1} f_{I_2} ] =
\]
\[
\sum_{I\subseteq[d]}  \bbE[ f_{I}^2 ]  +  \sum_{I_1\neq I_2\subseteq[d]} 0 =
\sum_{I\subseteq[d]}  \bbE[ f_{I}^2 ]  
\]

\end{proof}
\normalsize

Moreover, our model $g$ approximating the function $f$ decomposes in the same way and we divide our estimation problem into $2^d$ different signal estimation problems with:
\[
\|f-g\|^2 = \sum_{\cI\subseteq[d]} \|f_\cI -g_\cI\|^2
\]

In the next two subsections, we will further develop this framework to layout why such a decomposition should succeed by only estimating a subset of all possible interactions.
In particular, we will show that when $|\cI|$ is large, it will become likely that $\|f_\cI\|^2 < \|f_\cI-g_\cI\|^2$ as the effective signal-to-noise ratio shrinks as $|\cI|$ grows.
In other words,
noisy estimates of the signal on average perform even worse than the baseline guess of zero signal.

\subsection{Higher dimensional signals decay with dimension}
For this section we will focus on a Fourier signal $f$ on the $d$-dimensional rescaled torus $[-1,1]^d$.

\begin{proposition}
If we assume that our signal is in the class $C^k$ of $k$-differentiable smooth functions, then we have that our Fourier coefficients decay like $(\prod_{j=1}^n \frac{1}{m_j})^k$.
This is a known fact from Fourier analysis and can be easily derived using a Fourier series.
\end{proposition}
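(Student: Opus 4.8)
The plan is to obtain the decay estimate by repeated integration by parts, the classical Fourier-analytic argument on a torus. Writing the Fourier coefficient of $f$ at the frequency vector $m=(m_1,\dots,m_d)$ as
\[
\hat f(m) = \frac{1}{2^d}\int_{[-1,1]^d} f(x)\, e^{-i\pi\, m\cdot x}\, dx ,
\]
where the normalization reflects period $2$ in each coordinate, the key structural fact is that $f$ and its derivatives of order at most $k$ are periodic on the torus, so integration by parts in any single variable $x_j$ produces no boundary term.

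First I would process one coordinate at a time. Fix a coordinate $j$ with $m_j\neq 0$ and integrate by parts once in $x_j$: this contributes a factor $\tfrac{1}{i\pi m_j}$ and replaces $f$ by $\partial_{x_j} f$, with the boundary terms cancelling by periodicity. Iterating this $k$ times in the $x_j$ direction yields a factor $(i\pi m_j)^{-k}$ and the derivative $\partial_{x_j}^{k} f$. Carrying this out for every coordinate $j$ with $m_j\neq 0$ gives
\[
\hat f(m) = \Big(\prod_{j:\, m_j\neq 0} \frac{1}{(i\pi m_j)^k}\Big)\cdot \frac{1}{2^d}\int_{[-1,1]^d} \Big(\prod_{j:\, m_j\neq 0}\partial_{x_j}^{k}\Big) f(x)\, e^{-i\pi m\cdot x}\, dx .
\]
Then I would bound the remaining integral: since $f$ lies in $C^k$ (interpreted as having the relevant iterated mixed partials continuous) on the compact torus, the mixed derivative $\big(\prod_{j}\partial_{x_j}^{k}\big) f$ is bounded by a constant $M$ depending only on $f$ and $k$. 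Using $|e^{-i\pi m\cdot x}|=1$ and taking absolute values,
\[
|\hat f(m)| \;\le\; \frac{M}{\pi^{k d'}}\prod_{j:\, m_j\neq 0} \frac{1}{|m_j|^{k}} ,
\]
where $d'$ is the number of nonzero entries of $m$; one may equally bound the integral by the $L^1$ norm of the mixed derivative. This is precisely the asserted decay $\big(\prod_j \tfrac{1}{m_j}\big)^k$ up to an absolute constant, with the understanding that the product is taken over the nonzero frequency components.

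The \textbf{main obstacle} is really only the bookkeeping around vanishing frequency components and the precise reading of ``$C^k$'': one cannot divide by $m_j$ when $m_j=0$, so the product must be restricted to nonzero coordinates (a zero component simply means one does not integrate by parts in that variable), and for the full chain of integrations by parts to be valid one needs the iterated mixed partial $\partial_{x_1}^{k}\cdots\partial_{x_d}^{k} f$ to exist and be, say, continuous or $L^1$ — a mild strengthening of bare $C^k$. I would resolve this by either stating the hypothesis in this mixed-smoothness form or noting that, under the weaker hypothesis, the estimate still holds coordinatewise in each variable separately. Everything else is routine, so, as the statement already indicates, I would present this as a short self-contained computation rather than cite an external reference.
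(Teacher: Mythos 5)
The paper does not actually supply a proof of this proposition; it simply asserts it as ``a known fact from Fourier analysis,'' so there is no internal argument to compare against. Your integration-by-parts derivation is the standard textbook route and is correct as written. The one substantive point you raise is also correct and worth keeping: for the \emph{product} decay $\prod_{j} |m_j|^{-k}$ one must integrate by parts $k$ times in \emph{each} nonzero coordinate, which requires the mixed partial $\bigl(\prod_{j:\,m_j\neq 0}\partial_{x_j}^{k}\bigr) f$ to exist and be bounded (or at least $L^1$). That is a dominating-mixed-smoothness condition of total order up to $kd$, strictly stronger than the usual reading of ``$C^k$'' as all partials of total order $\le k$ continuous. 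Under the usual reading of $C^k$ one only gets decay like $\|m\|^{-k}$ rather than the tensor-product bound. Since the subsequent proposition in the appendix (the $\zeta(2k)^K$ computation) really does rely on the coordinatewise $\prod_j m_j^{-k}$ scaling, the mixed-smoothness reading is the one the paper implicitly intends, and your remark correctly identifies the needed strengthening of the hypothesis. The handling of zero frequency components (omit them from the product; do not integrate by parts in those variables) is also right.
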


Consequently, we will consider a random signal defined as follows:
Draw coefficients $c_{\overrightarrow{m}}$ and $d_{\overrightarrow{m}}$ from $N(0,\sigma^2)$ independently for all ${\overrightarrow{m}}\in\bbN_0^d$.
Rescale these coefficients according to k-smooth decay as $a_{\overrightarrow{m}} := c_{\overrightarrow{m}} \cdot (\prod_{j:m_j\neq0} \frac{1}{m_j})^k$ and $b_{\overrightarrow{m}} := d_{\overrightarrow{m}} \cdot (\prod_{j:m_j\neq0} \frac{1}{m_j})^k$. 
Define our function as the Fourier series:
\[f({\overrightarrow{x}}) = \sum_{{\overrightarrow{m}}\in\bbN_0^d} a_{\overrightarrow{m}} \cos(2\pi \overrightarrow{m}\cdot\overrightarrow{x}) +
 \sum_{{\overrightarrow{m}}\in\bbN_0^d} b_{\overrightarrow{m}} \sin(2\pi \overrightarrow{m}\cdot\overrightarrow{x})\]
We know that with probability one this will correspond to a k-smooth function on $[-1,1]^n$.
Let us assume that our $\overrightarrow{x}$ are uniformly distributed on the cube/ torus.

\begin{proposition}
When we consider all interactions of size $K$ ($\bbI_K := \{\cI\subseteq[d] : |\cI|=K\} )$, the norm of our function scales like:
\[
\sum_{\cI\in\bbI_K} \|f_\cI\|^2 = \binom{d}{K} \frac{(\zeta(2k)/2)^K}{(\zeta(2k)/2+1)^d}
\]
\end{proposition}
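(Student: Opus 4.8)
The plan is to compute $\|f_\cI\|^2$ for a fixed $\cI$ with $|\cI|=K$ and then sum over the $\binom{d}{K}$ such sets. First I would use the ANOVA decomposition established in the previous subsection: $\|f\|^2 = \sum_{\cI\subseteq[d]}\|f_\cI\|^2$, and similarly identify which Fourier modes $\cos(2\pi\overrightarrow{m}\cdot\overrightarrow{x})$, $\sin(2\pi\overrightarrow{m}\cdot\overrightarrow{x})$ contribute to the component $f_\cI$. Since the torus $[-1,1]^d$ carries the uniform measure and the trigonometric system is a product system, a mode $\overrightarrow{m}$ with support $\{j : m_j\neq 0\}$ lies entirely inside the component indexed by its support: $\Tilde f_\cI$ averages out all coordinates outside $\cI$, killing any mode whose support is not contained in $\cI$, and the inductive subtraction $f_\cI = \sum_{\cJ\subseteq\cI}(-1)^{|\cI|-|\cJ|}\Tilde f_\cJ$ isolates exactly the modes with support \emph{equal} to $\cI$. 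So $\|f_\cI\|^2$ is the sum of squared coefficients over $\{\overrightarrow{m} : \{j:m_j\neq0\}=\cI\}$.

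Next I would take expectations over the random coefficients. By orthonormality of $\{\sqrt2\cos,\sqrt2\sin\}$ on $[-1,1]$ (with the constant mode contributing $1$), Parseval gives $\bbE\|f_\cI\|^2 = \tfrac12\sum_{\overrightarrow{m}:\,\mathrm{supp}(\overrightarrow{m})=\cI}(\bbE a_{\overrightarrow{m}}^2 + \bbE b_{\overrightarrow{m}}^2)$, and $\bbE a_{\overrightarrow{m}}^2 = \bbE b_{\overrightarrow{m}}^2 = \sigma^2\prod_{j:m_j\neq0}m_j^{-2k}$. Summing over all $\overrightarrow{m}$ supported exactly on $\cI$ factorizes as a product over $j\in\cI$ of $\sum_{m=1}^\infty m^{-2k} = \zeta(2k)$. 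Collecting the normalization constants (the factor $\tfrac12$ per active coordinate from $\sqrt2\cos/\sqrt2\sin$, and a convention absorbing $\sigma^2$ and an overall mode count so the total $\|f\|^2$ over all $\cI$ telescopes to $1$), this should give $\|f_\cI\|^2 = (\zeta(2k)/2)^{|\cI|}/(\zeta(2k)/2+1)^d$: the numerator is the product over active coordinates, and the denominator is the normalizing partition function $\prod_{j=1}^d(1 + \zeta(2k)/2)$ coming from requiring $\sum_{\cI\subseteq[d]}\|f_\cI\|^2 = \prod_{j=1}^d(1+\zeta(2k)/2)\big/(\zeta(2k)/2+1)^d = 1$. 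Finally, summing the constant value of $\|f_\cI\|^2$ over the $\binom{d}{K}$ sets $\cI$ of size $K$ yields the claimed
\[
\sum_{\cI\in\bbI_K}\|f_\cI\|^2 = \binom{d}{K}\frac{(\zeta(2k)/2)^K}{(\zeta(2k)/2+1)^d}.
\]

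The main obstacle I anticipate is bookkeeping the normalization conventions correctly: pinning down exactly where the factor $1/2$ per coordinate comes from (the $\cos$/$\sin$ pair versus the $L^2$-normalization of trigonometric functions on $[-1,1]$ versus the product $\prod_{j:m_j\neq0}$ in the rescaling $a_{\overrightarrow{m}}$), and confirming that the denominator $(\zeta(2k)/2+1)^d$ is the intended global normalization rather than something folded into $\sigma^2$. A secondary technical point is justifying interchange of expectation and the (a.s. convergent) Fourier series — but this is covered by the $L^1$/$L^2$ integrability hypotheses stated earlier and the almost-sure $C^k$ regularity, so Fubini–Tonelli applies. Once the per-coordinate factorization and the normalization are settled, the identity follows by multiplying the constant term by the binomial count.
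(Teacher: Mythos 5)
Your proposal is correct and follows essentially the same route as the paper: identify the Fourier modes with support exactly $\cI$ as the content of $f_\cI$, apply Parseval with a factor of $2^{-|\cI|}$ (one factor of $\tfrac12$ per active coordinate, which you state in prose even though the displayed Parseval line shows only a single $\tfrac12$), sum the $k$-smooth coefficient variances to get $(\zeta(2k))^{|\cI|}$ per support set, multiply by the $\binom{d}{K}$ choices of support, and fix $\sigma^2$ so that $\|f\|^2=1$ to produce the $(\zeta(2k)/2+1)^d$ denominator via the binomial theorem. The paper's proof is identical in substance, merely organized as expanding $\|f\|^2$ first via Parseval with the $2^{-\|\overrightarrow{m}\|_0}$ weights and then partitioning the sum by the $0$-norm of $\overrightarrow{m}$, rather than starting from the ANOVA component $f_\cI$ as you do.
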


\begin{proof}
Let us first introduce the 0-norm $\|{\overrightarrow{m}}\|_0$ which denotes how many nonzero entries the index ${\overrightarrow{m}}$.
This will be useful because while our basis functions are orthogonal we are now considering a probabilistic space instead of a normally scaled $L^2$ space meaning we will be integrating against the uniform measure $dx/2$
We know that all of our basis functions are orthogonal in this space and so by Parseval's identity we see that:
\[
\bbE[ f^2 ] = (a_0^2+2a_0b_0+b_0^2) +  \sum_{0\neq{\overrightarrow{m}}\in\bbN_0^d} a_{\overrightarrow{m}}^2\cdot2^{-\|m\|_0}
\]
\[+ \sum_{0\neq{\overrightarrow{m}}\in\bbN_0^d} b_{\overrightarrow{m}}^2\cdot2^{-\|m\|_0}
\]
Moreover, due to direct calculation or usage of our proposition from our previous section, we can calculate the value of the interactions of feature size $K$.
First, let's calculate for $K=0$.
We see that $\bbE[(a_0^2+2a_0b_0+b_0^2)] = 2\sigma^2$
For $K>0$, we can see that 
\[
\sum_{\cI\in\bbI_K} \|f_\cI\|^2 = 2^{-K}
\bbE \bigg[
\sum_{{\overrightarrow{m}}\in\bbN_0^d : \|{\overrightarrow{m}}\|_0 = K}
a_{\overrightarrow{m}}^2 +
b_{\overrightarrow{m}}^2
\bigg]
\]
We can also see that there are $\binom{d}{K}$ regions where $\|{\overrightarrow{m}}\|_0 = K$ will hold corresponding to the $\binom{d}{K}$ feature subsets of size $K$.
We will briefly calculate this sum for only one of such subsets, namely $I=\{1,\dots,K\}\in\bbI_K$.
\[
\sum_{{\overrightarrow{m}}\in\bbN_0^K}
\bbE \bigg[
a_{\overrightarrow{m}}^2 +
b_{\overrightarrow{m}}^2
\bigg]
=
\sum_{m_1,\dots,m_K=1}^\infty \frac{2\sigma^2}{m_1^{2k} \cdot\dots\cdot m_K^{2k}}
\]
Now consider the Riemann-Zeta function defined as $\zeta(a) = \sum_{j=1}^\infty \frac{1}{j^a}$ we see that our summation can be taken over each index as the Riemann-Zeta function and we get $2\sigma^2 (\frac{\zeta(2k)}{2})^K$.
Combining this with the fact that we know there are  $\binom{d}{K}$ such sums we get that the overall contribution is indeed $\binom{d}{K} 2\sigma^2 (\frac{\zeta(2k)}{2})^K$.
Setting $\|f\|=1$, we get that $\frac{1}{\sigma^2} = 2(\frac{\zeta(2k)}{2}+1)^d$ and we are done.
\end{proof}

Consequently, we find that even for a very general signal, we still ultimately have a decay in the signal.
The signal we consider is a completely generalized $C^k$ Fourier signal and still decays to having zero strength for some $K$ large enough.
This is the first part of our argument and one could equally assume that the higher dimensional interactions are actually sparse or mostly zero.
This is due to the structure we would expect out of real-world data unlike a completely arbitrary signal.
One should further note that only a measure zero subset of $C^k$ functions are $C^{k+1}$ and so even when we assume $k$-times differentiable we implicitly consider a signal that it is not $(k+1)$ times differentiable.
Thus, we have shown there is eventually decay for a very arbitrary signal.

\subsection{Higher dimensional signals are intrinsically more difficult to estimate}

We should imagine that the amount of data we need to estimate what happens in n-dimensional space should be something like $r^n$ where r is the `resolution' of the space that we require.
For instance, if we use histograms, we could imagine that we only need $2^n$ different bins in a binary space but more like $10^n$ or higher bins in a continuous space.

We do some calculations that show in the `high resolution' regime where we have enough resolution to be able to sufficiently capture the details of our function, we will ultimately need samples scaling like $\frac{r^n}{SNR}$.

Consider a signal with additive noise of the form:
\[
y = \mu\cdot\exp[2\pi im(x_1 + \dots + x_n)] + \sigma\cdot\epsilon
\]
where we define this function with $x$ uniformly distributed on the n-dimensional torus $\bbT^n$ and $\epsilon\sim N(0,1)$.
Note that this is a complex-valued function and we will use the notation $\Bar{y}$ to denote the complex conjugate.

If we consider a histogram estimation technique which divides $\bbT^n$ into $r^n$ bins corresponding to a uniform binning $\{0,\frac{1}{r},\dots,\frac{r-1}{r},1\}$.
We can then calculate exactly the `value' of using our sample as part of our histogram.
Moreover, if we imagine each bin having $N$ samples we can estimate the overall impact of the N-sample estimate.


Using our sample is beneficial whenever:
\[
(1+\frac{\sigma^2}{\mu^2}) < (N+1)\cdot c(r,m,n)
\]
where our constant can be written 
\[
c(r,m,n) = \left(\frac{\sin(\pi \frac{m}{r})}{\pi \frac{m}{r}}\right)^{2n}
\]

\begin{proof}
\small
Let us take our histogram estimator to have resolution $r\in\bbN$ and with bins having endpoints $0=a_0 < b_0 = \frac{1}{r} = a_1 < b_1 = \frac{2}{r} = a_2 < \dots < \frac{r}{r} = b_r$.
Our histogram estimator will use the estimate defined in each bin as $\Hat{f} = \frac{1}{N}\sum_{j=1}^N (\mu\cdot\exp[2\pi im\cdot x_j] + \sigma\cdot\epsilon_j)$
We will break our estimate of the mean-squared error of our function approximation over each bin of the histogram.
We will then have our approximation error given by:
\[
\int_{a^{(1)}}^{b^{(1)}} \dots \int_{a^{(n)}}^{b^{(n)}} \|f-\Hat{f}\|^2 =
\int_{a^{(1)}}^{b^{(1)}} \dots \int_{a^{(n)}}^{b^{(n)}} [f\Bar{f}-2f\Bar{\Hat{f}}+\Hat{f}\Bar{\Hat{f}}] =
\]
\[
\|f\|^2-2\bbE[f]\bbE\bigg[\Bar{f}+\frac{1}{N}\sum_{j=1}^N\epsilon_j\bigg] + \bbE\bigg[ \frac{1}{N^2}\sum_{j=1}^N\sum_{k=1}^N
 (\mu\cdot\exp[2\pi im\cdot x^{(j)}] + \sigma\cdot\epsilon_j) (\mu\cdot\exp[-2\pi im\cdot x^{(k)}] + \sigma\cdot\epsilon_k) \bigg] =
\]
\[
\|f\|^2-2\bbE[f]\bbE[\Bar{f}] + \frac{\sigma^2}{N^2} \bbE[\sum_j \epsilon_j^2] +
\frac{\mu^2}{N^2} \sum_j \bbE[ \exp[2\pi i m\cdot (x^{(j)}-x^{(j)})] ] +
\] \[
\frac{\mu^2}{N^2} \sum_{j\neq k} \bbE[  \exp[2\pi i m\cdot (x^{(j)}-x^{(k)})] ] = 
\]
\[
\|f\|^2-2\mu^2\bbE[f/\mu]\bbE[\Bar{f}/\mu] + \frac{\sigma^2}{N} +
\frac{\mu^2}{N^2} (N\cdot 1)
+
\frac{\mu^2}{N^2} ((N^2-N)\cdot \bbE[f/\mu]\bbE[\Bar{f}/\mu] = 
\]

In order to be a good estimate (better than zero), we need that everything but the first term is smaller than zero, because this will mean we are decreasing from the baseline of $\|f\|^2$.
Hence, we desire that 
\[
-2Nc + \frac{\sigma^2}{\mu^2} +
1 + (N-1)\cdot c < 0
\]
where we define $c$ as the quantity $\bbE[f/\mu]\bbE[\Bar{f}/\mu]$.

Hence, we need that:
\[
(N+1)\cdot c > \frac{1}{SNR} + 1
\]

We now calculate $c$.
\[
\bbE[f/\mu] = 
\int_{a^{(1)}}^{b^{(1)}} \dots \int_{a^{(n)}}^{b^{(n)}} \exp[2\pi im\cdot x] =
\prod_{j=1}^n \frac{\exp[2\pi imb]-\exp[2\pi ima]}{2\pi im(b-a)}
\]
So,
\[
\bbE[f/\mu]\bbE[\Bar{f}/\mu] = 
\prod_{j=1}^n (\frac{e^{[2\pi imb]}-e^{[2\pi ima]}}{2\pi im(b-a)}) \cdot (\frac{e^{[-2\pi imb]}-e^{[-2\pi ima]}}{-2\pi im(b-a)})
= \frac{(-1)^d(-1)^d}{(2\pi)^{2n}}\cdot
\]
And so then because no matter our choice of $a,b$ we always have that $b-a=\frac{1}{r}$( by our choice of uniform spacing) it follows that:
\[ \prod_{j=1}^n \frac{e^{[2\pi im(b-b)]}-e^{[2\pi im(b-a)]}-e^{[2\pi im(a-b)]}-e^{[2\pi im(a-a)]}}{{m(b-a)}\cdot m(b-a)}
\]
\[
= \frac{r^{2n}}{(2\pi)^{2n}m^{2n}}\cdot
\prod_{j=1}^n (1-\cos(2\pi m/r)-\cos(2\pi m/r)+1)
\]
\[
=(\frac{r}{2\pi m})^{2n}\cdot
(2(1-\cos(2\pi m/r))^n
=(\frac{r}{2\pi m})^{2n}\cdot
(2(2\sin(\pi m/r)^2))^n
=\bigg(\frac{\sin[\pi \frac{m}{r}]}{\pi \frac{m}{r}}\bigg)^{2n}
\]

Altogether, this yields:
\[
1+ \frac{1}{SNR} < (N+1)\cdot \bigg(\frac{\sin[\pi \frac{m}{r}]}{\pi \frac{m}{r}}\bigg)^{2n}
\]
\end{proof}
\normalsize

We can consider the regime of `high resolution' where we have that $r>>m$ meaning that the complexity of the histogram estimator (number of bins) is much higher than the signal complexity (frequency of oscillation).
In this regime, we can see that $c(r,m,n) \approx 1$ meaning our equation reduces to :
\[
(1+\frac{\sigma^2}{\mu^2}) < (N+1)
\]
If we further note that our N is a bin-wise sample size we will note that we actually need $N' = N r^n$ total samples for our estimator and that $\frac{\sigma^2}{\mu^2}$ is just the signal to noise ratio.
Finally, we get that $N' >  \frac{1}{SNR}\cdot r^n$ is the requirement to get an improvement over alternatively assuming the signal is uniformly zero.
At a fixed level of noise, this means that we need samples of the order $\gtrsim r^n$.

We should additionally consider, as we connect this argument with the section above, that we have previously implied that the strength of the signal $\mu$ also shrinks with growing dimension, hence raising the requirement of sample size $N'$ even higher as we increase the dimension $K$.

\subsection{Summary of Theory}
In conclusion, we find multiple strands of theory which support that estimating the full functional form is both unnecessary and at risk of overfitting.
Altogether, we have that $\|f_\cI\|^2$ shrinks as $|\cI|$ grows and that $\|f_\cI-g_\cI\|^2$ grows as $|\cI|$ grows.
Ultimately, we expect there to be an intersection between these for most datasets.
Consequently, there will be an optimal cut-off point where higher-order feature interactions should be left unconsidered.

In practice, we find that it is moreover beneficial to cut out not only those signals which are too small because they are too high dimensional, but also those signals which we estimate to be small from the available data.
Although touched upon in the first section for psuedo-boolean functions, a more fleshed out argument for continuous functions which follow a sparsity assumption of the type in this paper, leveraging work in high-dimensional statistics, would be a worthwhile direction.
Additionally, a more thorough investigation of the distribution of feature interactions under theoretical setups beyond the Gaussian-Fourier setup we consider could be useful to gain further intuition about the empirical distribution of interactions found in real-world data.
Such results could further enhance the practicality of the assumptions made in high-dimensional statistics arguments for sparse feature interactions.

\section{Discussion on GAMs}

Generalized Additive Models (GAMs) have existed for decades since their conception in 1990 \cite{hastie1990originalGAM}.
Originally formulated with only univariate functions and fit using splines, they were soon extended to `Smoothing-Spline ANOVA' which used pairs of features to fit bivariate splines, being able to represent even more complex functions \cite{wahba1994ssanova}.
As the years have passed, the functional model being used to fit the nonlinear shape functions has changed alongside popular ML techniques of the era: random forests, boosting machines, kernel methods, and recently neural networks.
In fact, it has only been in the last few years where neural additive models have grown in popularity as an interpretable alternative to deep neural networks \cite{agarwal2020nam,chang2022nodegam,yang2020gamiNet}.
Despite consistent usage over the years, the application of GAMs to fit trivariate functions has been extremely scant.
Besides scattered works fitting trivariate GAMs on very small-scale synthetic data, only SALSA \cite{kandasamy16salsa} seems to evaluate higher-order GAMs across multiple real-world datasets.
Two key obstructions which have prevented higher-order GAMs from becoming popular are: (1) three-dimensional functions are fundamentally more challenging to visualize and interpret; (2) cubic and higher scaling of the number of interactions is prohibitively expensive to train.

In this work, we argue that higher-order additive models have greater flexibility than existing additive models while having greater interpretability than completely black-box approaches.
We achieve the ability to model higher-order interactions by phrasing the model's interactions as a sparse selection problem.
In the following two sections, we address the major concerns of additive models through a discussion on sparsity and a discussion on feature correlation.


\subsection{Sparse Additive Models}
Work in the late '90s and early '00s brought work on high-dimensional statistics like LASSO and LARS into the limelight.
With the increasing popularity of sparse linear regression methods also came a body of work focusing on sparse nonlinear additive regression leveraging the same feature shrinkage techniques \cite{lin2006cosso,ravikumar2009spam,meier2009hdam}.
There have also been methods designed for sparse high-dimensional regression with bivariate models in mind \cite{tyagi2016spam2,liu2020ssam}.
Even recently, new methods have explored sparse regression for neural additive models, using the same principles of shrinkage estimation and regularization \cite{xu2022snam,yang2020gamiNet}.

We highlight two key differences between the existing work in sparsity for additive models and our work in sparsity.
First, we do not make the typical assumption of high-dimensional inference that the number of features ($d$) is much greater than the number of samples.
In fact, the highest number of features in the datasets we explore is only 90.
Rather, we make the assumption that the number of feature interactions ($2^d$) is much greater than the number of samples.
In this light, we are doing sparse regression in the interaction space rather than the feature space.
Second, we do not depend on shrinkage estimation and instead leverage heredity to provide sparse solutions.
All existing work in sparse additive models depends on shrinking the estimates of unimportant features through regularization.
In our work, we find that this is an infeasible approach for dealing with the exponentially growing number of higher-order feature interactions.
For this reason, we leverage the heuristics given by a partially trained neural network to guide our sparse selection algorithm.
We find that in conjunction with the heredity assumption, we are able to vastly reduce the search space and effectively complete the sparse selection problem.

Let us also briefly discuss the additional implications of sparse selection.
Fitting a model to fewer learned shape functions actually increases the interpretability of the model by needing to communicate fewer shape functions.
Some existing work has focused explicitly on the cognitive load of additive models by reducing the number of shape functions which must be communicated \cite{abdul2020cognitiveCogam}.
Other work \cite{chang2021howInterpAndTrustGAMs} notes that encouraging sparse solutions can have imbalanced accuracies across small subpopulations, possibly leading to unfair predictions.
Many of these properties are a consequence of the underlying feature distribution, whereas GAMs are most effective in the presence of totally independent features.



\subsection{Feature Correlation}
When interpreting additive models, it is important to consider the underlying distribution of the features $x$.
We focus on two simple but extreme examples to illustrate these points.

First, consider the function $f(x_1,x_2) = x_1 x_2$ where $x_1,x_2 \sim U(-1,1)$ but assume that they are completely dependent on each other: $x_1=x_2$.
Although the full 2D function is the XOR function on the domain, conditioning on either $x_1$ or $x_2$ immediately gives the value of the other.
Accordingly, each of the 1D conditional expectations are $\Bar{f}_1(x_1) = x_1^2$ and $\Bar{f}_2(x_2) = x_2^2$.
These univariate trends are then subtracted from the bivariate conditional expectation after normalization/ purification.
Consequently, on the domain of $x_1=x_2$ each of the univariate functions looks like $x_1 x_2$ but the bivariate function actually looks like $-x_1 x_2$, the opposite of what it truly is.
Although real-world data does not have such extreme correlations, similar phenomena can still make interpretation difficult.

Second, consider the function $f(x)$ which is $+1$ when $x = c_1$ and is $-1$ when $x = c_2$.
Further, suppose that $x$ is equal to $c_1$ for 99\% of the time and is $c_2$ only 1\% of the time.
It follows that the average value of $f$ is $0.98$.
The purified univariate value then becomes $+0.02$ at $x=c_1$ and $-1.98$ at $x=c_2$.
This looks like having a value of $x=c_2$ is 99 times more potent than having a value of $x=1$.
Accordingly, for visualization purposes, it might be more natural to normalize by the `counterfactual' distribution that both $x=c_1$ and $x=c_2$ are equally likely.

Tackling these challenges behind feature correlations could be one of the key technical limitations of additive models.
For example, on computer vision datasets, it becomes unreasonable to assume that one pixel is independent of its neighbor.
Similarly in natural language, certain words have greater co-occurrence which bring structure to language.


\newpage
\section{Feature Interaction Selection}
We introduce an aggregation procedure using Archipelago in its original format using zero as a baseline as well as a contrastive version. We turn a local explanation into a global explanation by using our aggregation procedure.
We use a feature interaction detection algorithm alongside heredity, further details are available in Algorithm \ref{alg:feature_interaction_selection}.

\label{app_sec:feature_interaction_selection_details}

\subsection{Existing Feature Interaction Detection (FID) Techniques} 
\paragraph{Archipelago} 
Archipelago estimates the feature interaction strength by approximating the Hessian of the model given a target data instance $x^*$ to be explained and a baseline data instance $x'$ to be compared against.
The secant approximation of the Hessian which is used by Archipelago is defined:
{
\small
\begin{multline}
    \omega_{i,j}(x) = \Big( \tfrac{1}{h_i h_j} (f({x}^{\star}_{\{i,j\}} +  {x}_{\setminus{\{i,j\}}}) - f({x}'_{\{i\}} + {x}^{\star}_{\{j\}} +  {x}_{\setminus{\{i,j\}}})  \\
    - f({x}^{\star}_{\{i\}} + {x}'_{\{j\}} +  {x}_{\setminus{\{i,j\}}}) + f({x}'_{\{i,j\}} +   {x}_{\setminus{\{i,j\}}})) \Big)^2
\end{multline}
\normalsize
}

\noindent
where $h_i=x_i^*-x'_i$, $h_j=x_j^*-x'_j$. 
We further approximate the true expectation over all contexts with the two point average used in \cite{tsang2020archipelago}: 
$\Bar{\omega}_{i,j} := \frac{1}{2}(\omega_{i,j}(x^*)+\omega_{i,j}(x'))$.
This generates a score $\Bar{\omega}_{i,j}$ for every possible feature pair $\{(i,j):i\neq j\in[d]\}$.
Higher degree interactions are calculated in the same way using a secant approximation of the higher-order derivative.

\paragraph{Integrated Hessians} The integrated gradient \cite{sundararajan2017integratedGradients} attribution to the feature $i$ given a target feature $x^*$ and a baseline feature $x'$ can be written as:
\[
\phi_i(x^*) := (x_i^* - x'_i) \int_0^1 \frac{\partial f(x'-\alpha(x^*-x'))}{\partial x_i} d\alpha
\]
This definition can be extended to two-dimensional interactions between features $i$ and $j$ and the integrated hessian \cite{janizek2020explaining} attribution is defined as:
\[
\phi_{i,j}(x^*) := (x_i^* - x'_i) (x_i^* - x'_i) \int_{\beta=0}^1 \int_{\alpha=0}^1 \alpha\beta \frac{\partial^2 f(x' + \alpha\beta(x^*-x'))}{\partial x_i \partial x_j} d\alpha d\beta
\]
While this method is able to give accurate feature attributions for differentiable models, the estimation of the integral makes it prohibitively slow, even for calculation of interaction pairs.
Moreover, running this method on higher-order interactions is nearly impossible with current implementations.

\paragraph{SHAP} SHapley Additive exPlanations \cite{lundberg2017shapleySHAP} are of growing popularity because of their nice theoretical properites and applicability to any black-box model.
They are defined by the following equation:
\[
\phi_i(v) = \sum_{S\subseteq([d]\setminus{i})} \frac{|S|!(d-|S|-1)!}{d!} \bigg(v(S\cup{j}) - v(S)\bigg)
\]
Where $v:\cP([d])\to\bbR$ is the value function, typically defined as $v(S):=\bbE[f(X) | X_S = x^*_S]$.

This definition was originally inspired by game theory, attributing how much of the value $v$ of a coalition $S$ was created by each player $i$.
Further extensions to interactions also exist like the Shapley-Taylor interaction index \cite{dhamdhere2019shapley}.
Unfortunately, estimating the sum in the definition of SHAP values is usually very expensive and fast implementations currently only exist for tree-based approaches \cite{lundberg2020treeSHAP}.

\subsection{Beyond Archipelago and ReLU}
As mentioned in the previous section, there are other existing methods for feature interaction attribution, namely Taylor SHAP and integrated hessians.
Unfortunately, both of these methods perform poorly on the main architecture of choice in this work: ReLU neural networks.
There are currently no fast implementations of SHAP outside of the tree-based approaches and Integrated Hessians requires the second partial derivative to exist, which is not the case for piecewise linear ReLU networks.
In theory, however, both of these techniques could be substituted into our Algorithm \ref{alg:feature_interaction_selection}.
In this way, one could imagine using (random forests + SHAP) or (tanh networks + integrated hessians), so long as the implementations are not prohibitively expensive.

In this direction, we pursue one such novel combination in NODE-GAM + Archipelago.
In Section \ref{sec:not_just_relu}, we use Archipelago on trained NODE models to select a subset of feature triples using Algorithm \ref{alg:feature_interaction_selection}.
We next train an adjusted implementation NODE-GA3M which is able to focus on the feature triples we prescribe.
As mentioned in the main body, this technique allows us to push state of the art on the Housing  dataset and demonstrates the capability of our pipeline as a more general approach.

\subsection{Further Visualization}

\begin{figure}[h]
    \centering
    \includegraphics[width=.95\columnwidth]{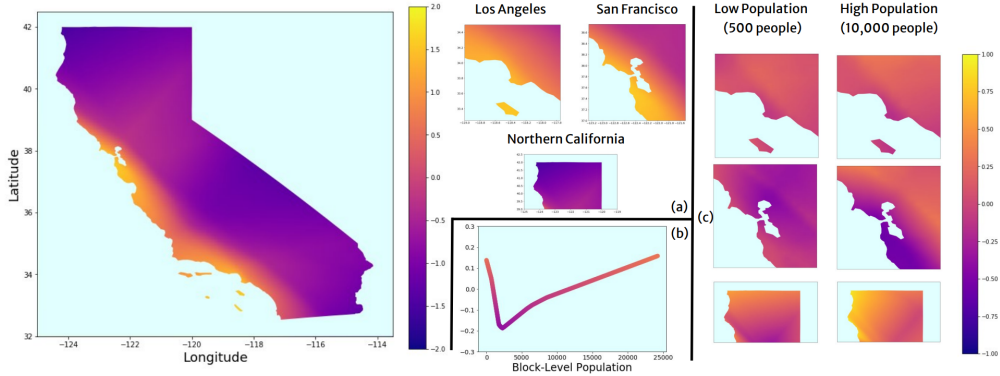}
    \caption{Reproduction of Figure \ref{fig:caliHouse_threeDimensionalVisualization}. Visualization of the 3D feature interaction for California Housing.}
    \label{app_fig:caliHouse_threeDimensionalVisualization}
\end{figure}

One might be interested in knowing how the interpretability of SIAN compares with a regular DNN using post-hoc feature attribution.
In an effort to illustrate the differences between the global interpretability of additive models and the local interpretability of feature attribution methods, we show the analogous interpretations coming from post-hoc attribution.

\begin{figure}[h]
    \centering
    \includegraphics[width=.95\columnwidth]{{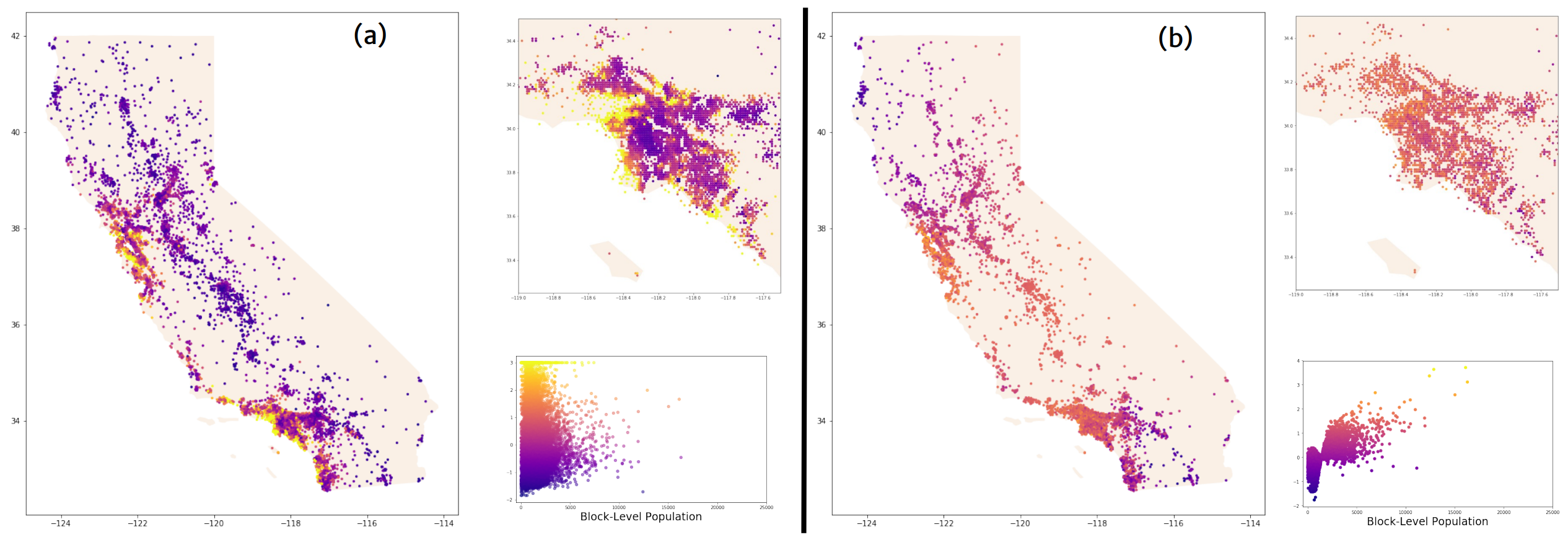}}
    \caption{Feature Attribution Visualization of two feature interaction for California Housing. (a) Directly visualizing the dataset. (b) Visualizing the Archipelago feature attributions.}
    \label{app_fig:caliHouse_postHocVisualization}
\end{figure}

The main qualitative difference comes from the fact that these visualizations are using scatter plots of the training samples rather than the heatmap used in the original visualization.
In Figure \ref{app_fig:caliHouse_postHocVisualization}a, it is clear that there is too much noise to discern any serious trends from location or population, considering it is possible that other available features are creating the visible differences in the map.
In Figure \ref{app_fig:caliHouse_postHocVisualization}b, we can see a somewhat similar trend to the one in Figure \ref{app_fig:caliHouse_threeDimensionalVisualization}a.
There is still some noise in the feature attribution as we can see in the panel zoomed into Los Angeles.
In the final plot of block-level population, we can see a more clear trend than the original data-level plot, however, ultimately it is still very difficult to discern a meaningful trend from this plot.
Further, visualizing a noisy scatter plot like this means we have very minimal insight into how the model would interpolate or extrapolate into new regions.



\twocolumn
\section{Visualizing MIMIC-III Shape Functions}
\label{app_sec:mimic_shape_visualizations}

\label{sec:append_mimic_viz}
In this section we fully depict all 45 shape functions required to make a prediction on a patient from the MIMIC-III dataset using the SIAN-2 model.
Although the full model is not quite as interpretable as linear regression, the entire model which beats random forests, deep networks, and boosting machines fits on only a few pages.
It may take some time to digest each of the shape functions, but it is easier than visualizing an arbitrary 30-dimensional function.

\begin{figure}[h]
    \centering
    \includegraphics[width=0.27\textwidth]{images/shape_functions/shape_fn_0.pdf}
    \includegraphics[width=0.23\textwidth]{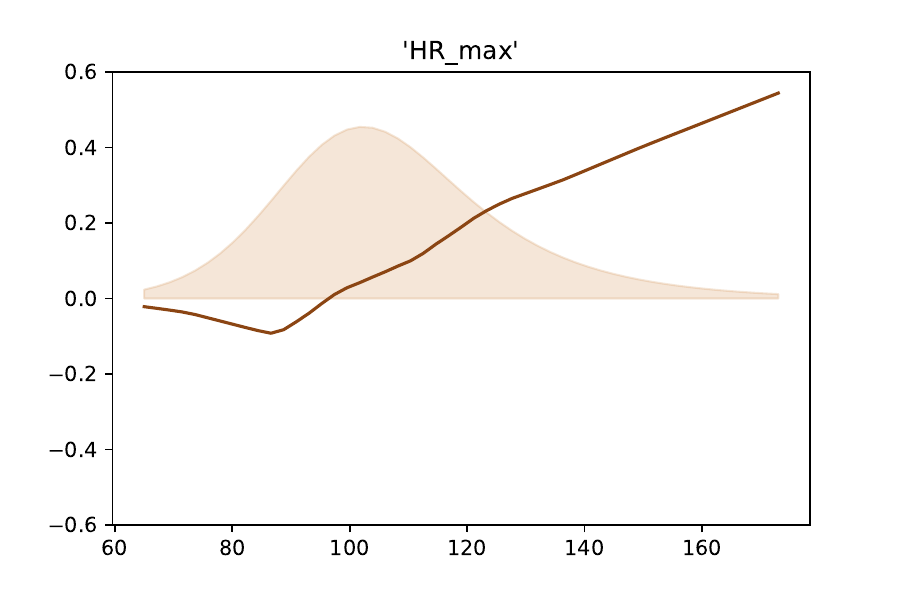}
    \includegraphics[width=0.23\textwidth]{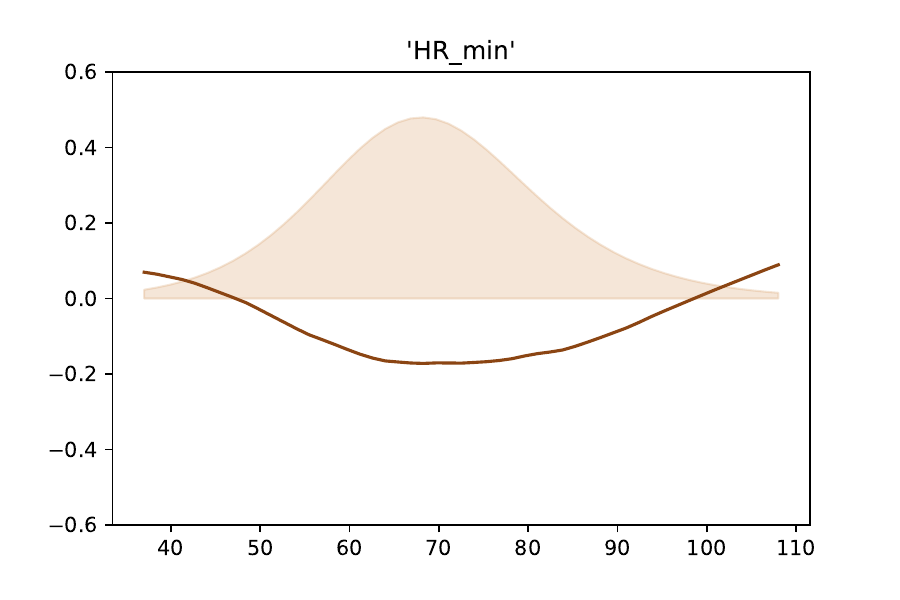}
    \includegraphics[width=0.23\textwidth]{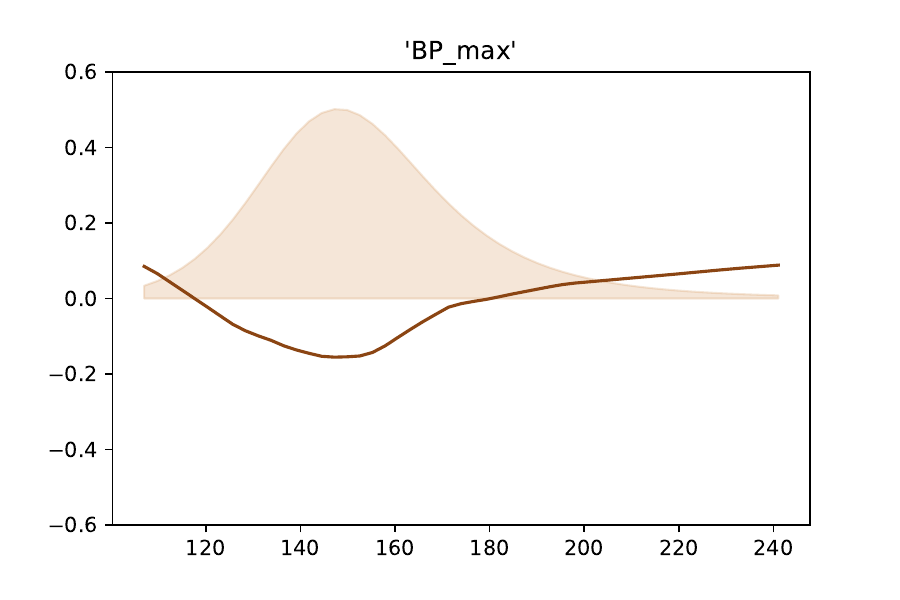}
    \includegraphics[width=0.23\textwidth]{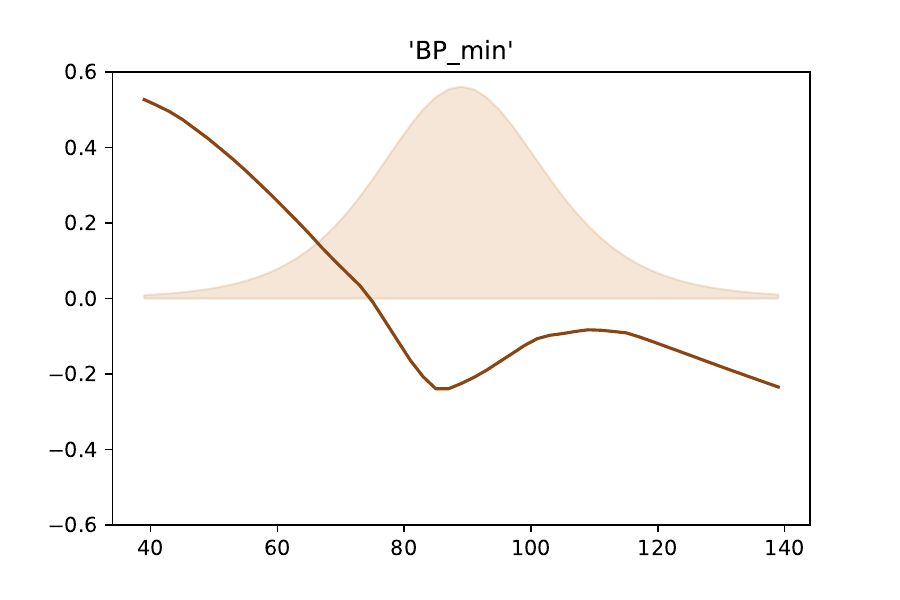}
    \includegraphics[width=0.23\textwidth]{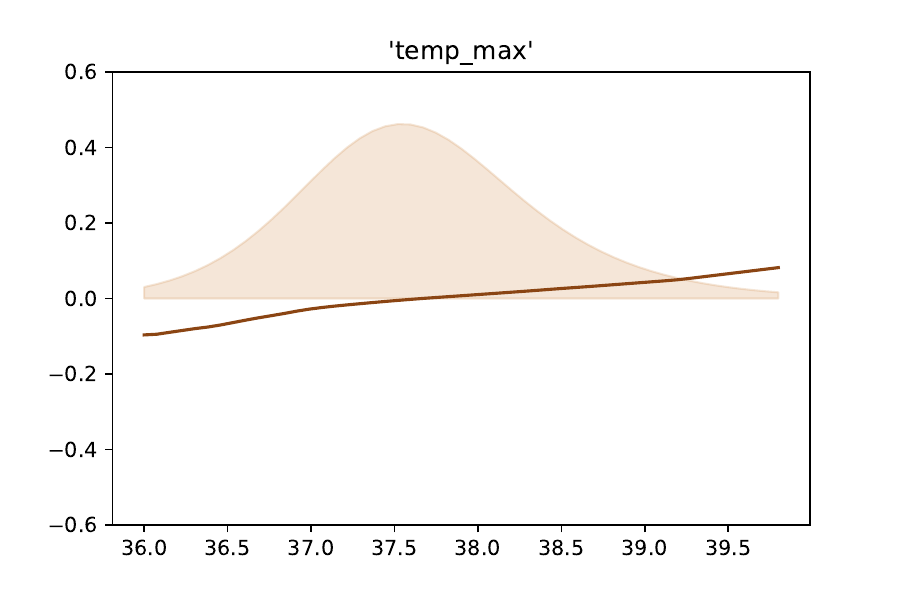}
    \includegraphics[width=0.23\textwidth]{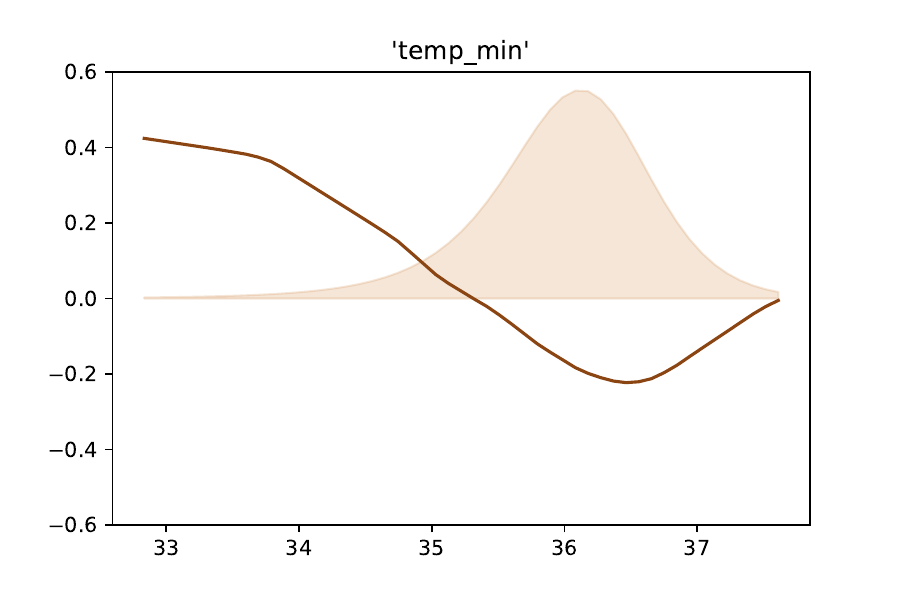}
    \caption{MIMIC 1D Shape Functions \\ (age, heart rate, blood pressure, temperature)}
\end{figure}

\begin{figure}[h]
    \centering
    \includegraphics[width=0.23\textwidth]{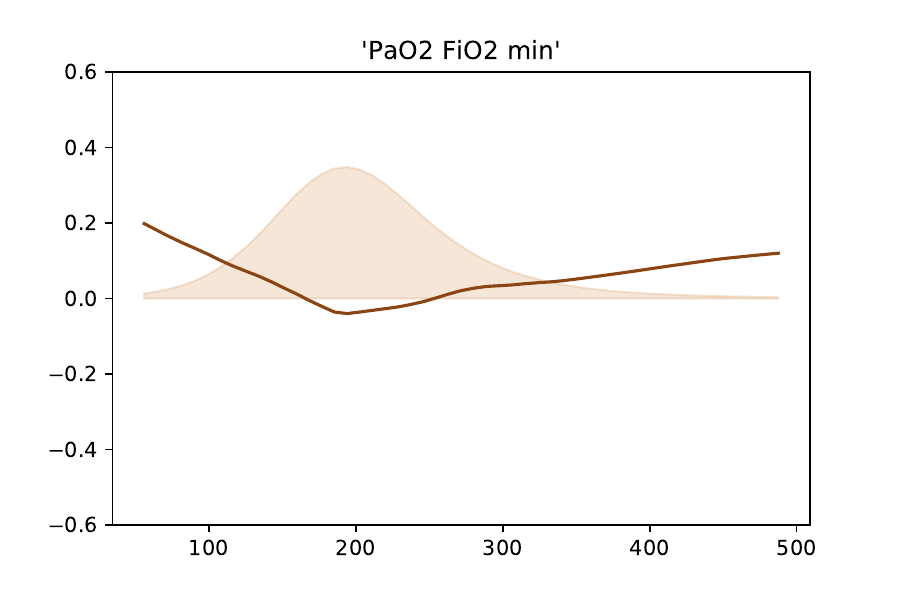}
    \includegraphics[width=0.23\textwidth]{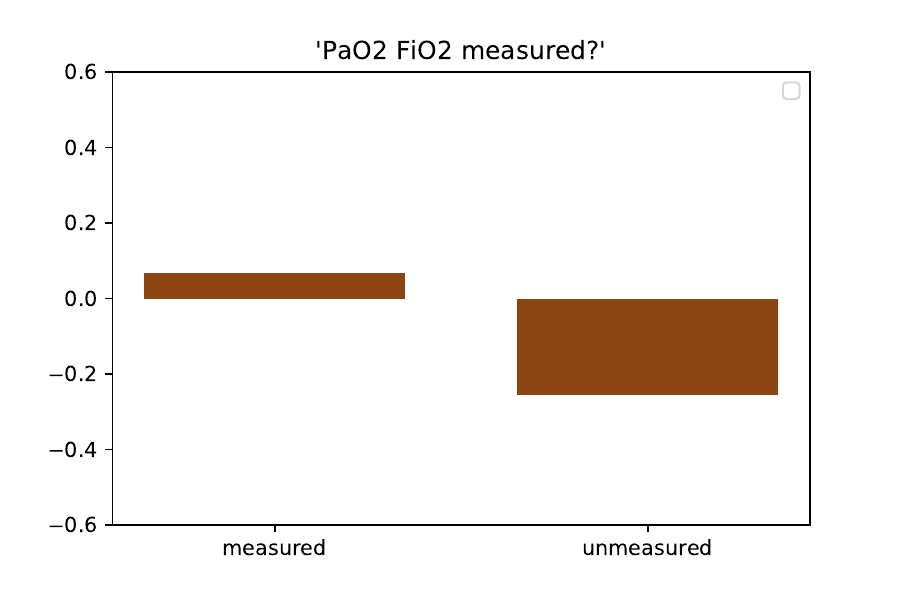}
    \includegraphics[width=0.23\textwidth]{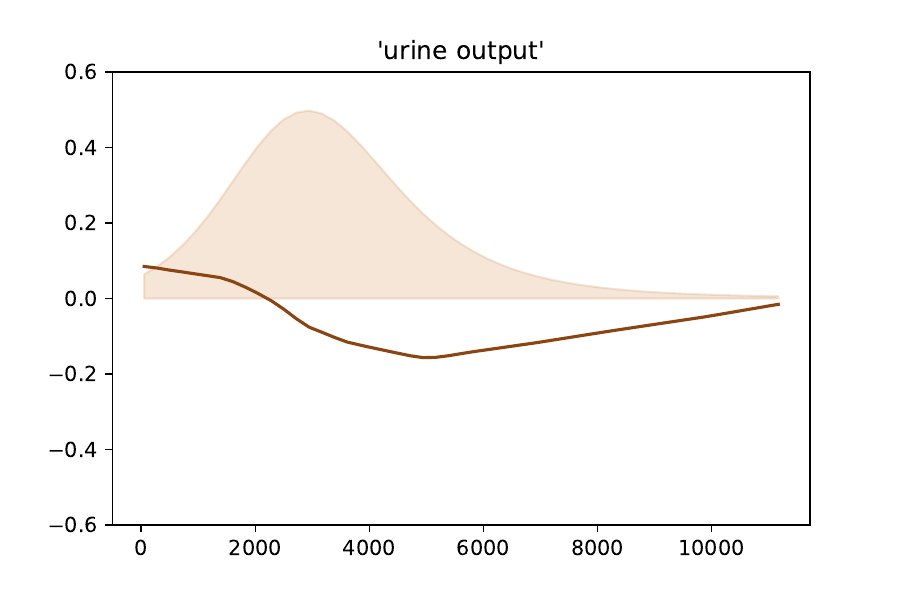}
    \includegraphics[width=0.23\textwidth]{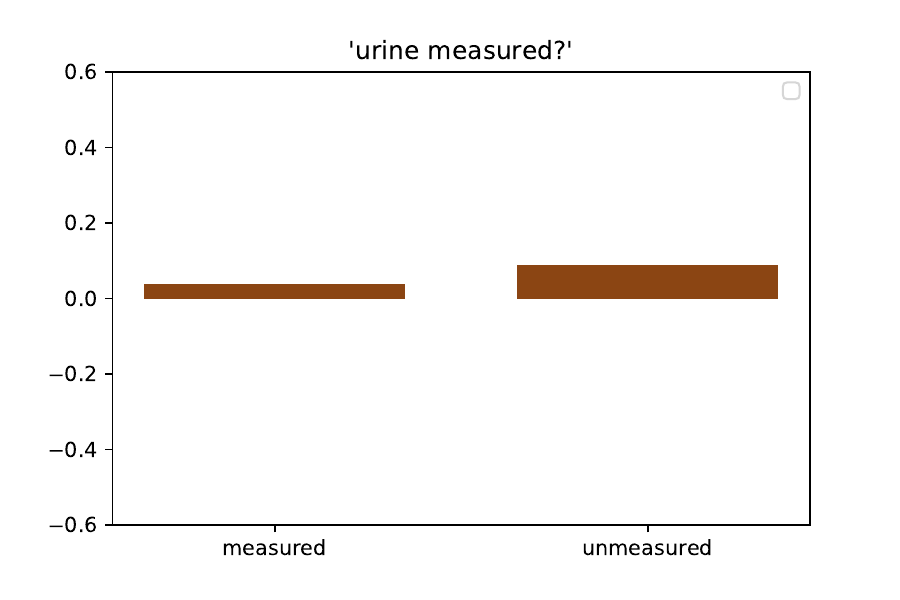}
    \caption{MIMIC 1D Shape Functions \\ (P/F oxygenation ratio, urine) }
\end{figure}
\newpage

\begin{figure}[h]
    \centering
    \includegraphics[width=0.23\textwidth]{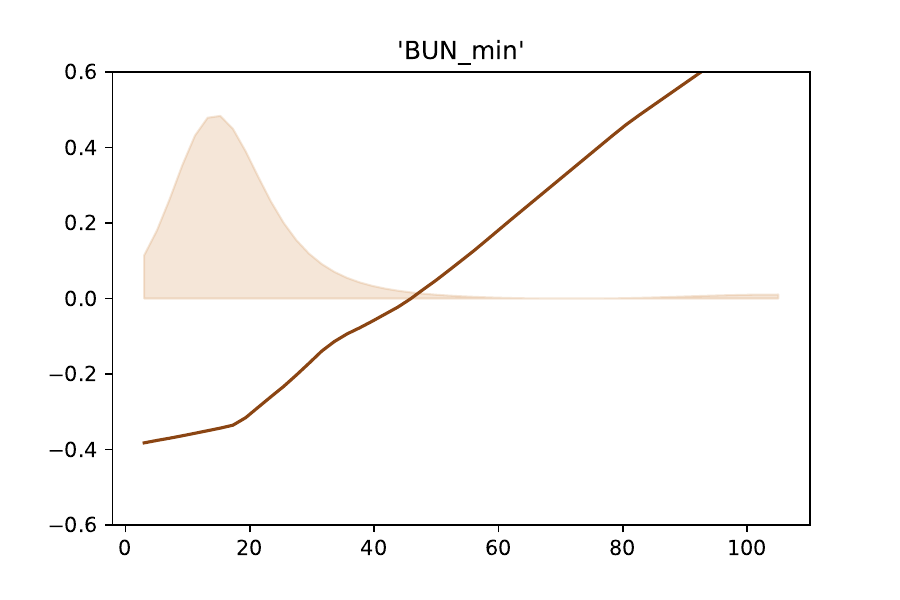}
    \includegraphics[width=0.23\textwidth]{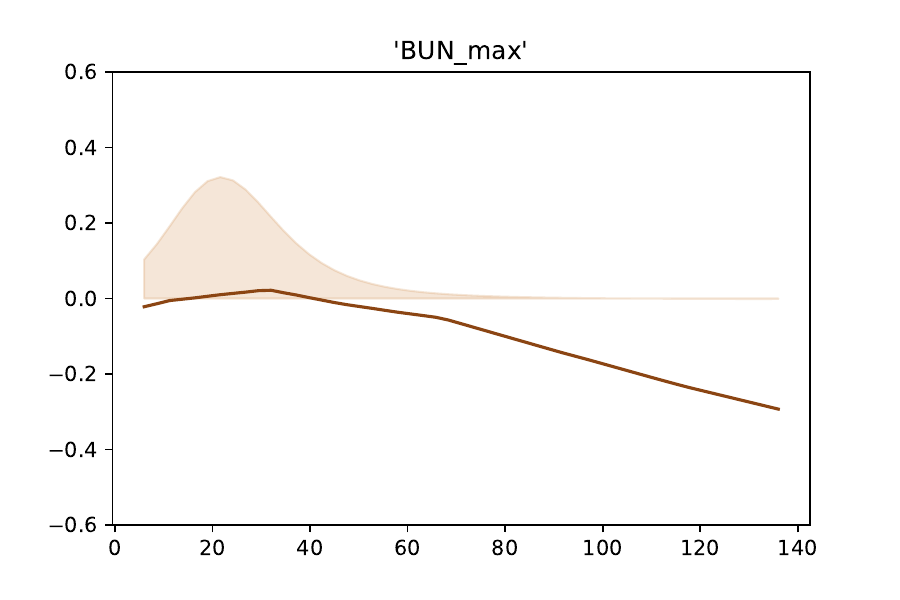}
    \includegraphics[width=0.23\textwidth]{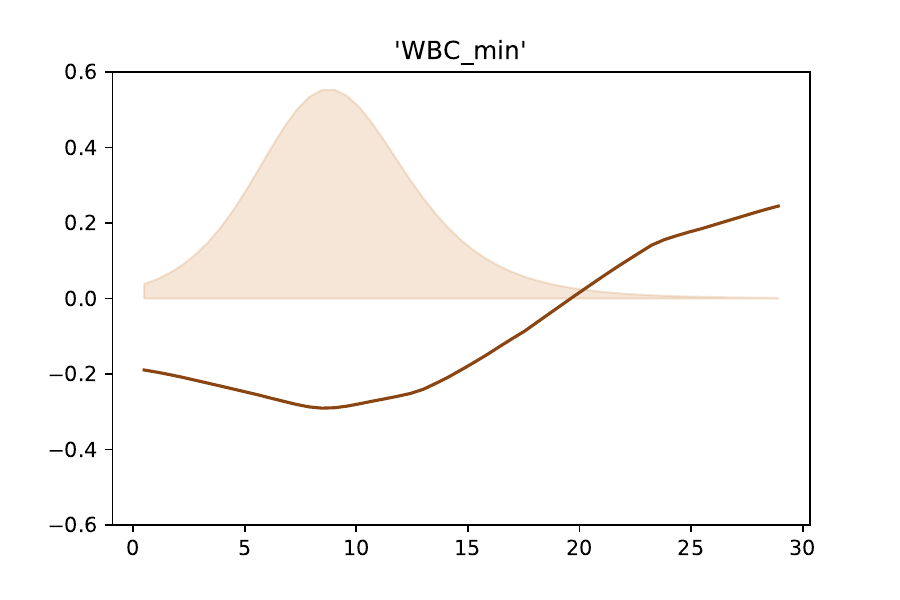}
    \includegraphics[width=0.23\textwidth]{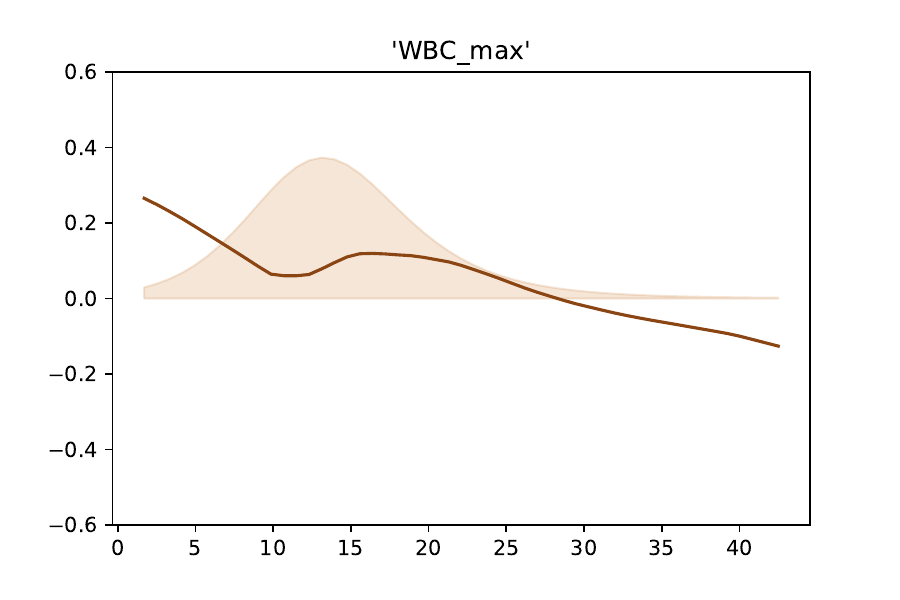}
    \includegraphics[width=0.23\textwidth]{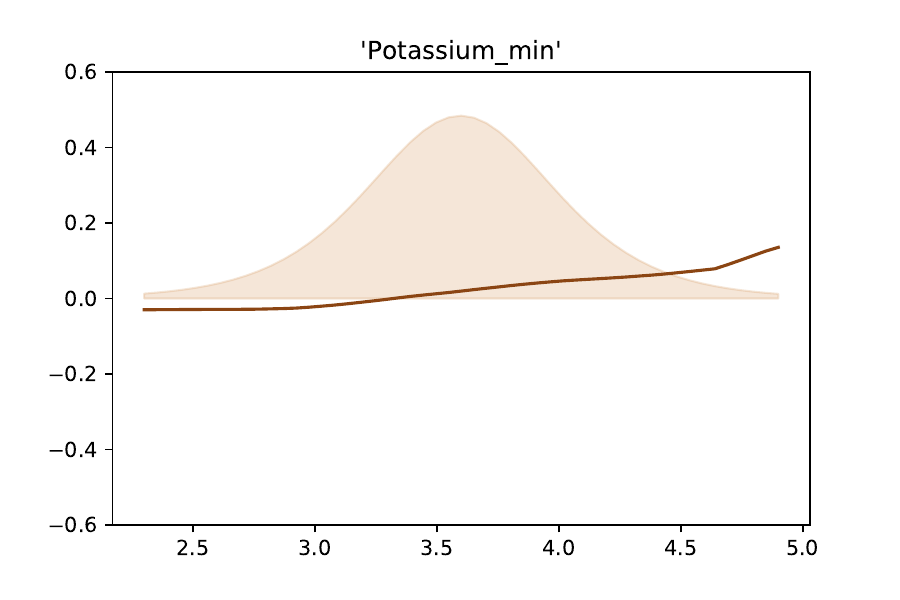}
    \includegraphics[width=0.23\textwidth]{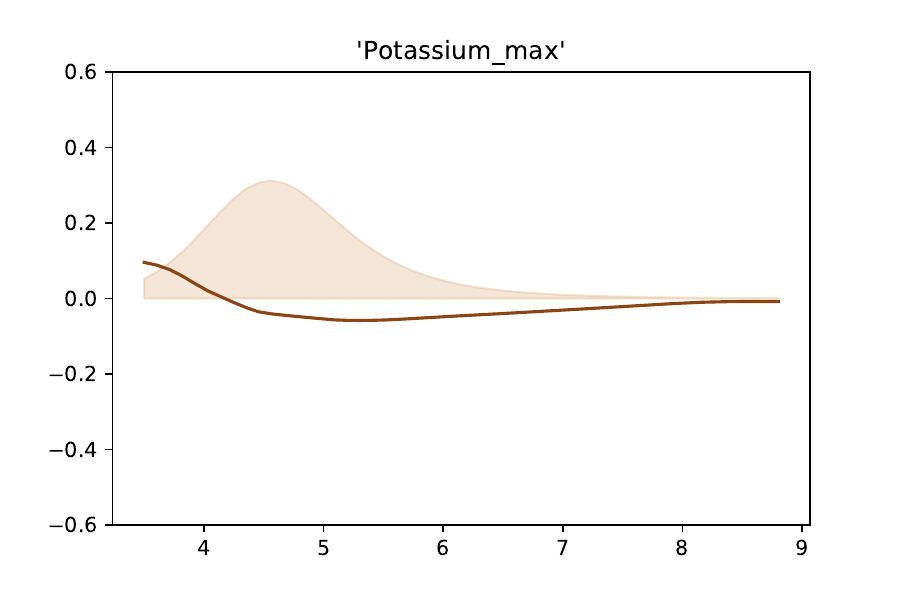}
    \includegraphics[width=0.23\textwidth]{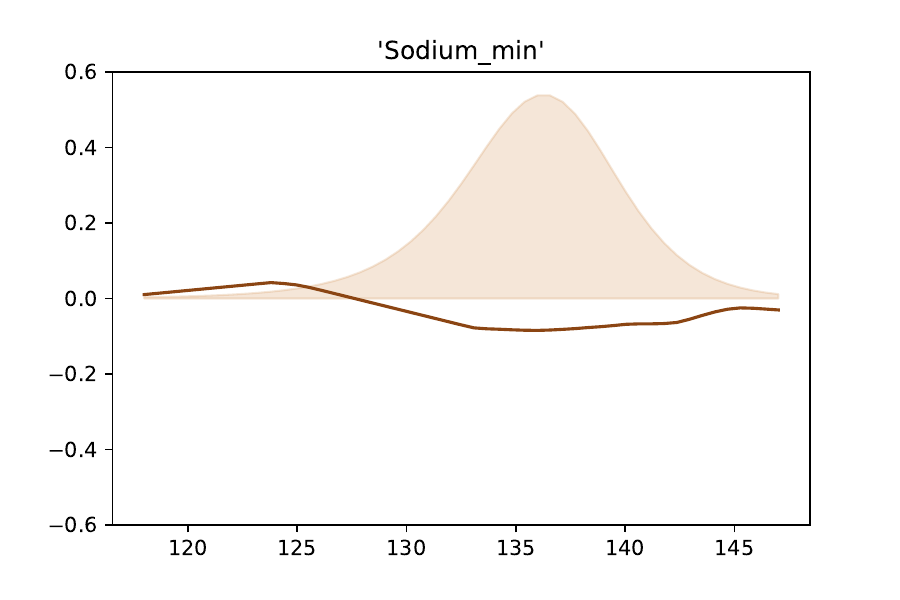}
    \includegraphics[width=0.23\textwidth]{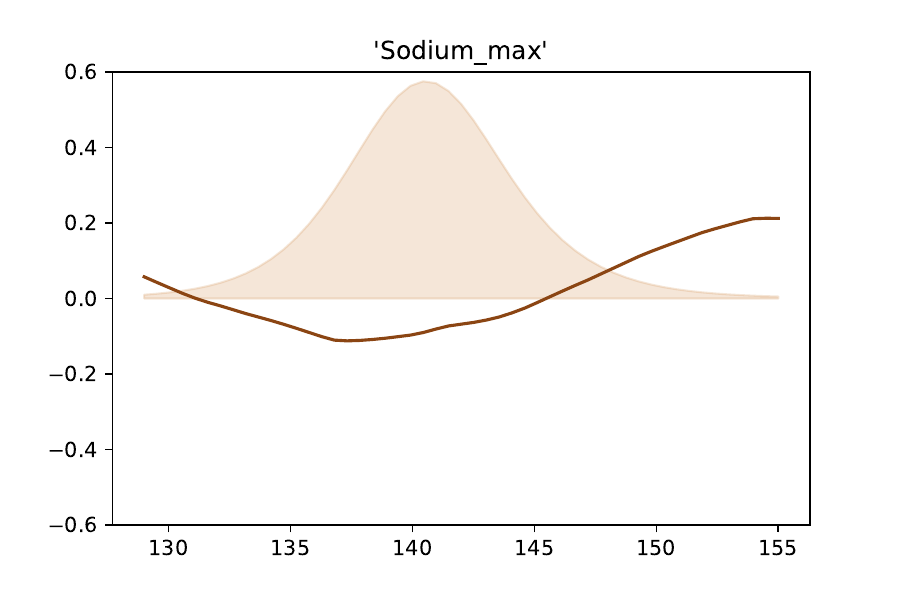}
    \includegraphics[width=0.23\textwidth]{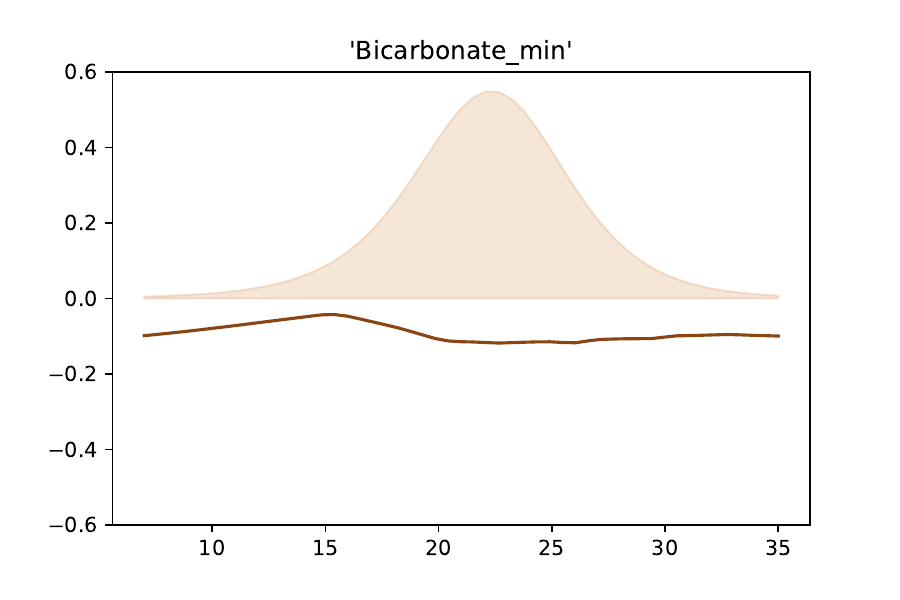}
    \includegraphics[width=0.23\textwidth]{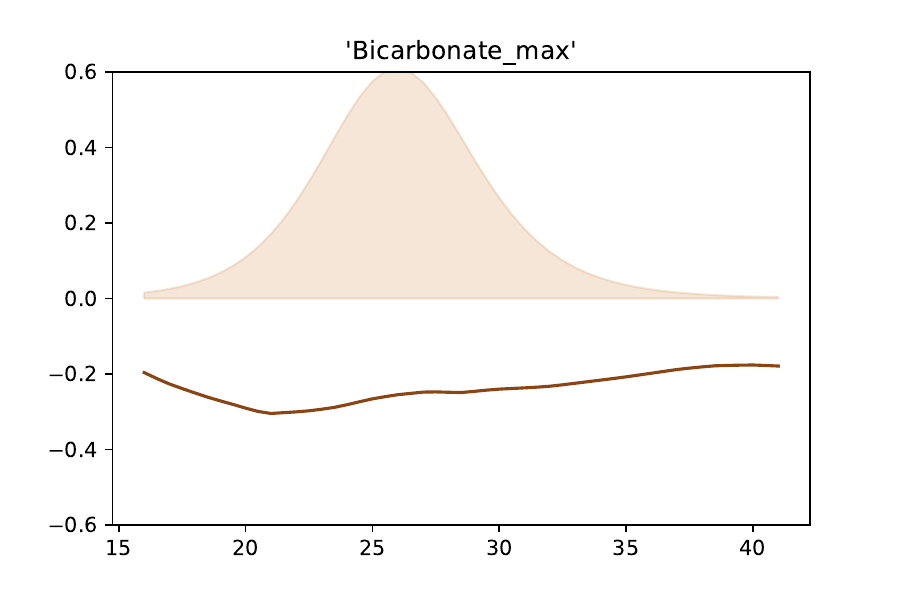}
    \includegraphics[width=0.23\textwidth]{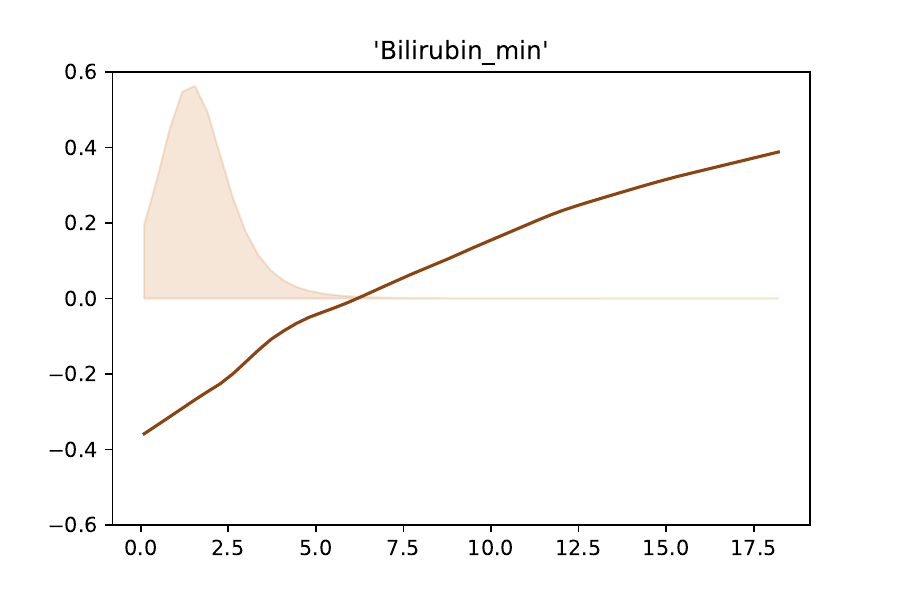}
    \includegraphics[width=0.23\textwidth]{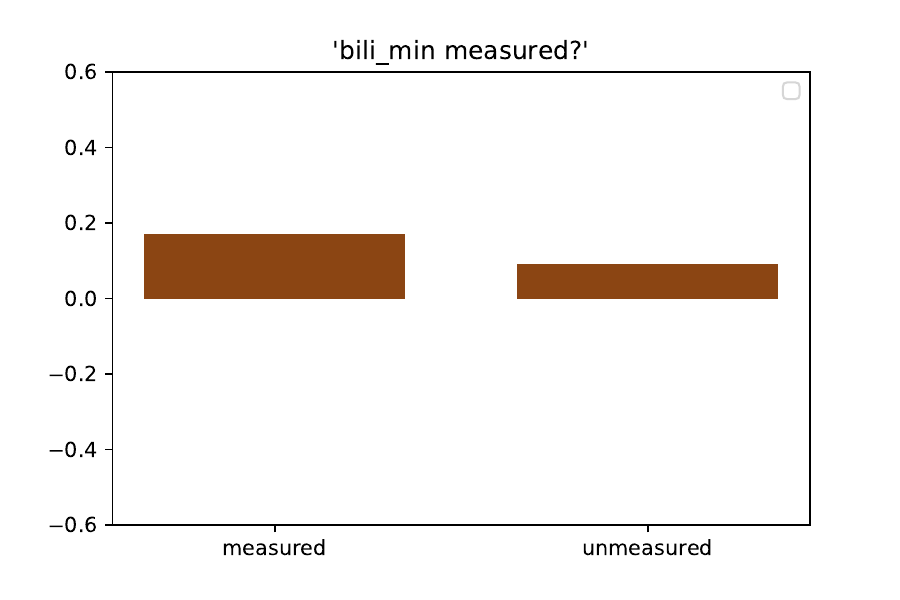}
    \includegraphics[width=0.23\textwidth]{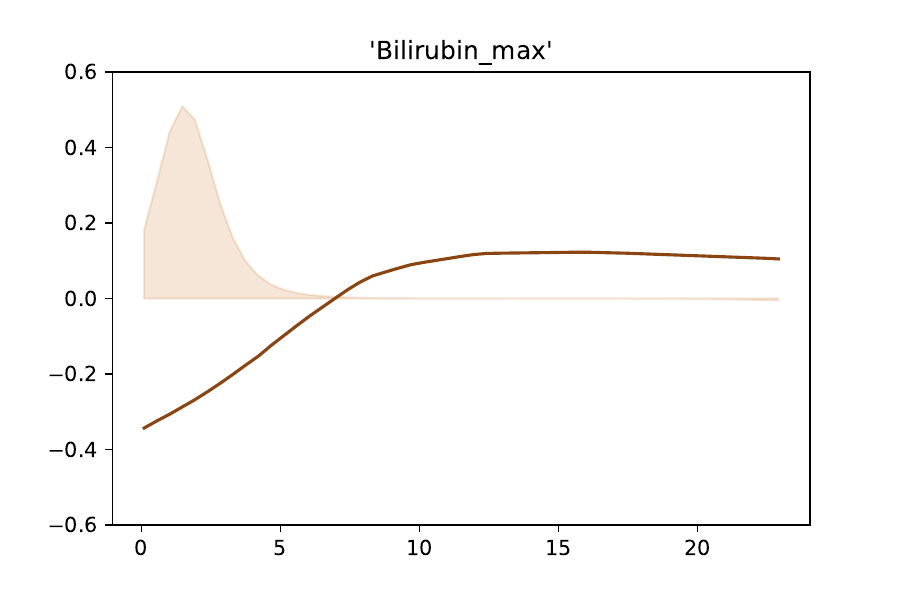}
    \includegraphics[width=0.23\textwidth]{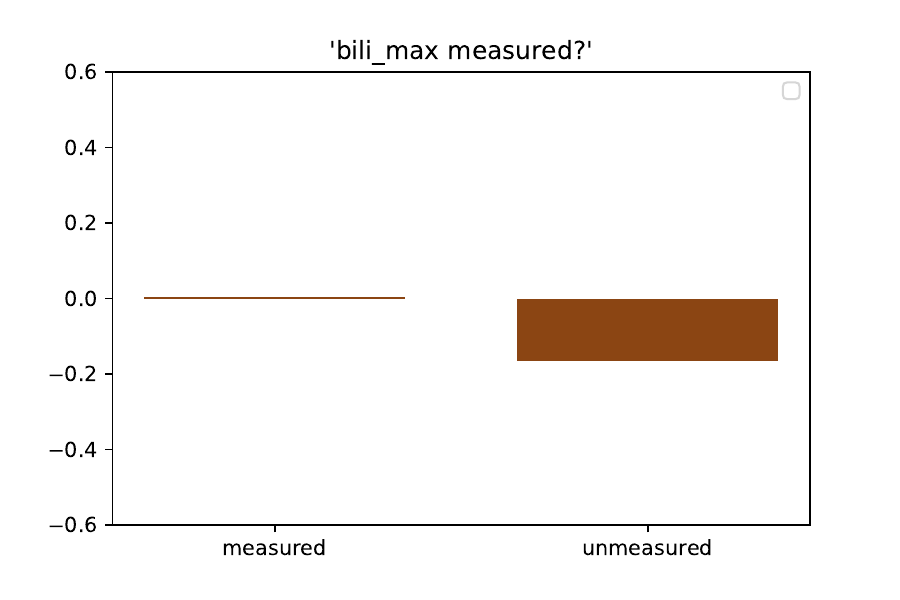}
    \caption{MIMIC 1D Shape Functions \\ (blood urea nitrogen, white blood cell count, potassium, sodium, bicarbonate, bilirubin)}
\end{figure}
\newpage

\begin{figure}[h]
    \centering
    \includegraphics[width=0.23\textwidth]{images/shape_functions/shape_fn_25.pdf}
    \includegraphics[width=0.23\textwidth]{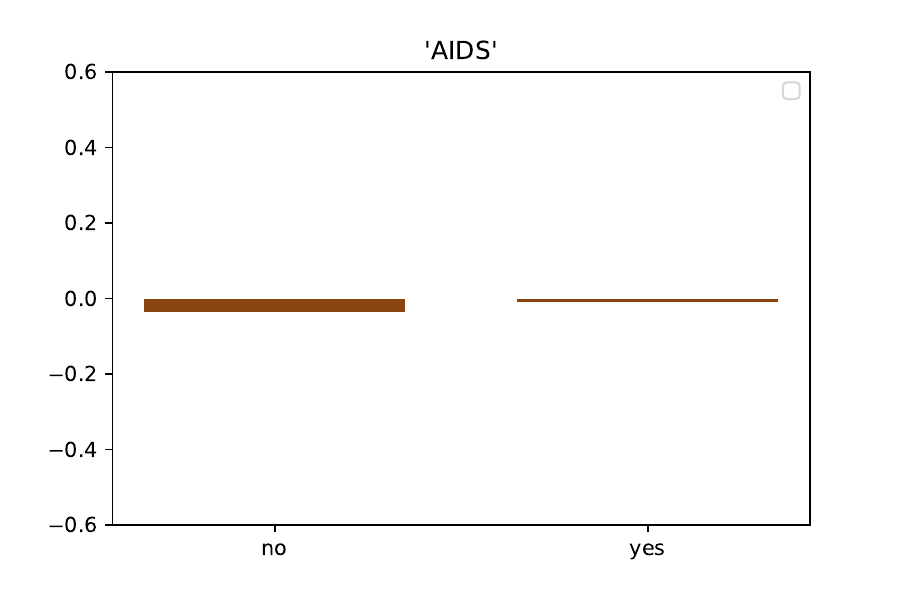}
    \includegraphics[width=0.23\textwidth]{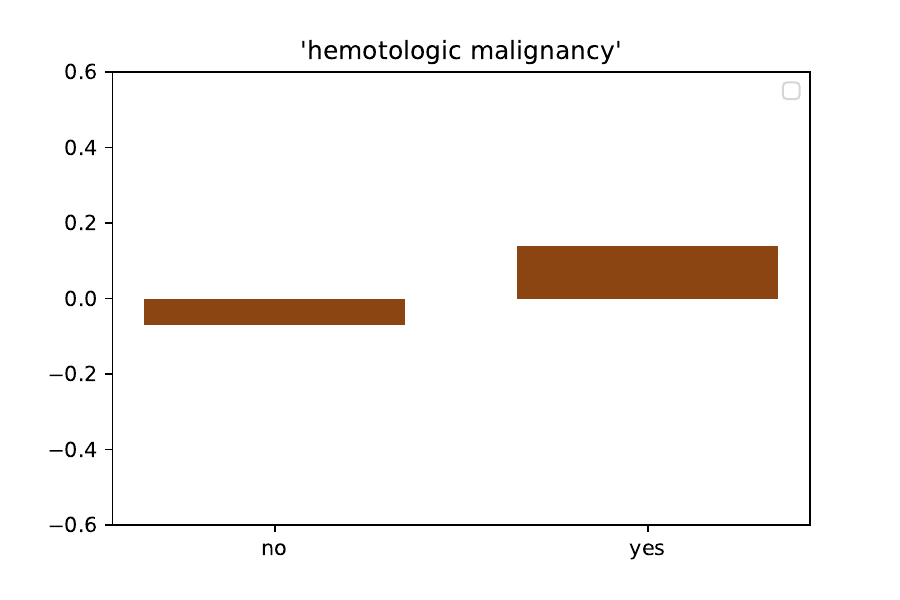}
    \includegraphics[width=0.23\textwidth]{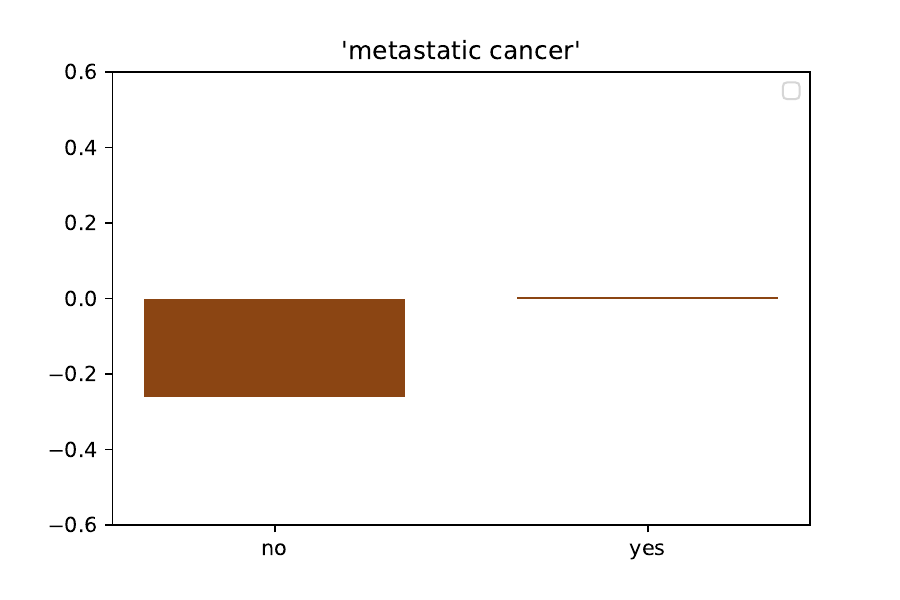}
    \includegraphics[width=0.27\textwidth]{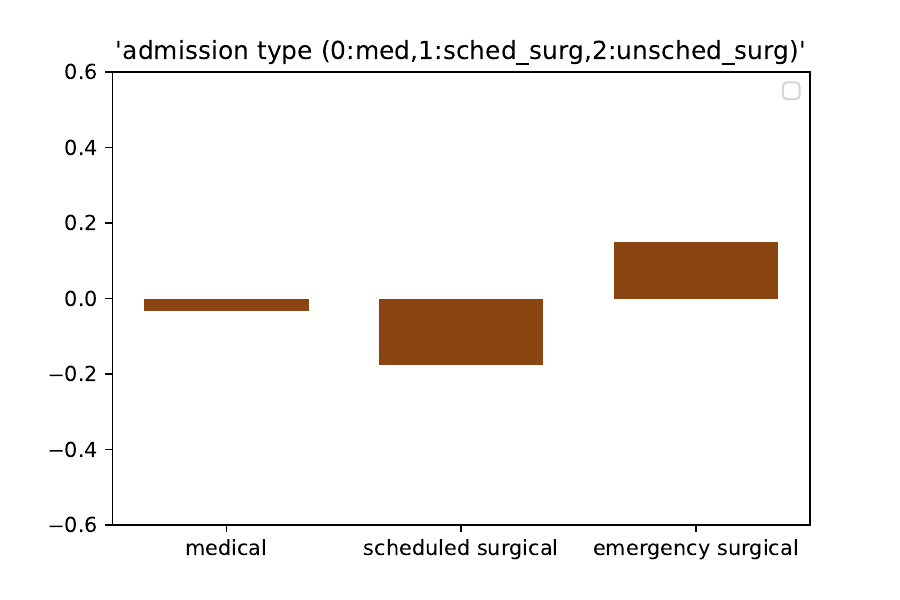}
    \caption{MIMIC 1D Shape Functions \\ (glascow coma scale, AIDS, hematologic malignancy, metastatic cancer, admission visit type)}
\end{figure}

\begin{figure}[h]
    \centering
    \includegraphics[width=0.23\textwidth]{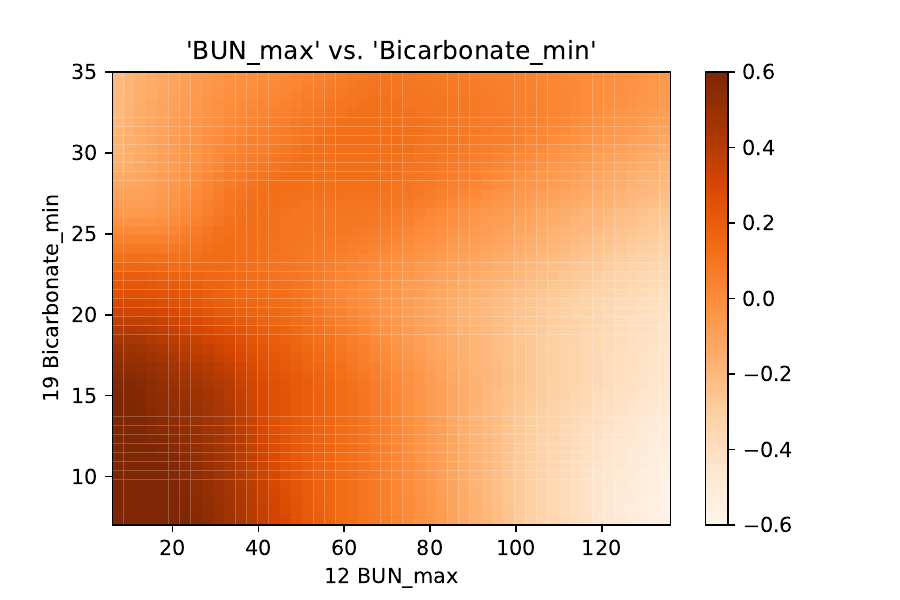}
    \includegraphics[width=0.23\textwidth]{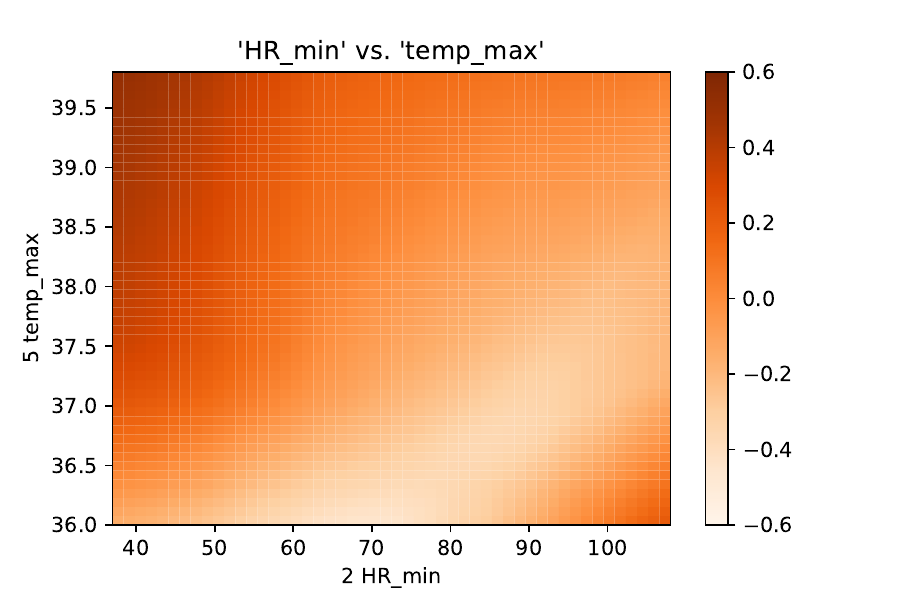}
    \includegraphics[width=0.23\textwidth]{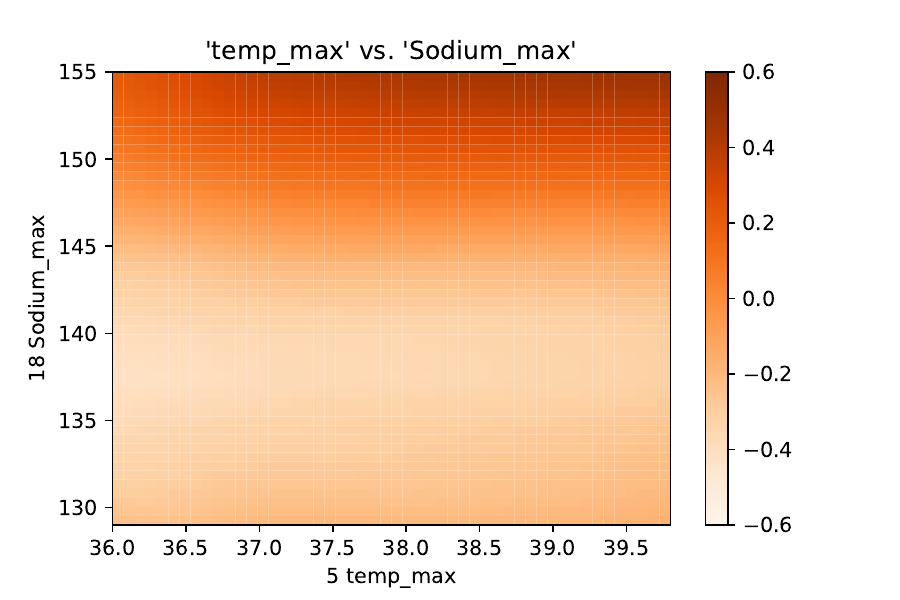}
    \includegraphics[width=0.23\textwidth]{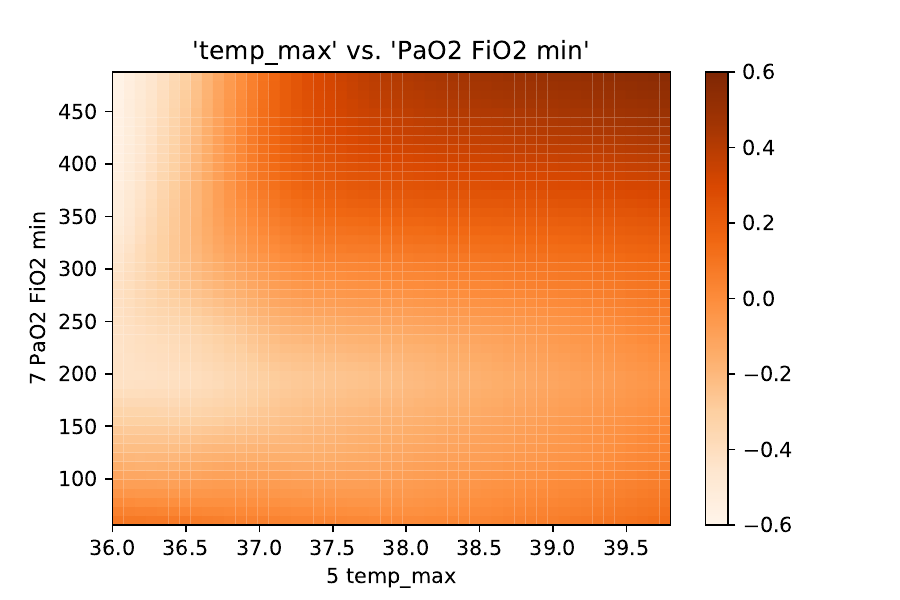}
    \includegraphics[width=0.23\textwidth]{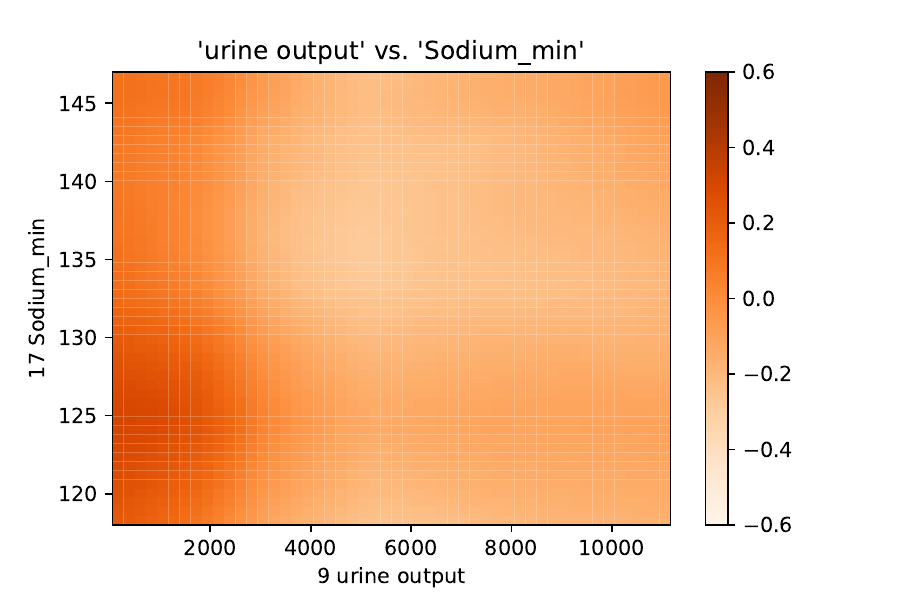}
    \includegraphics[width=0.23\textwidth]{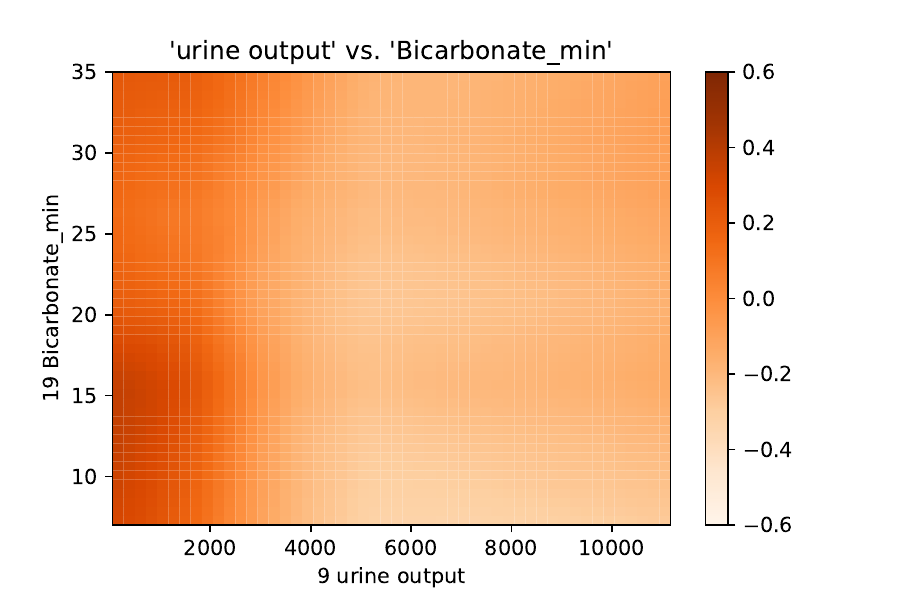}
    \includegraphics[width=0.23\textwidth]{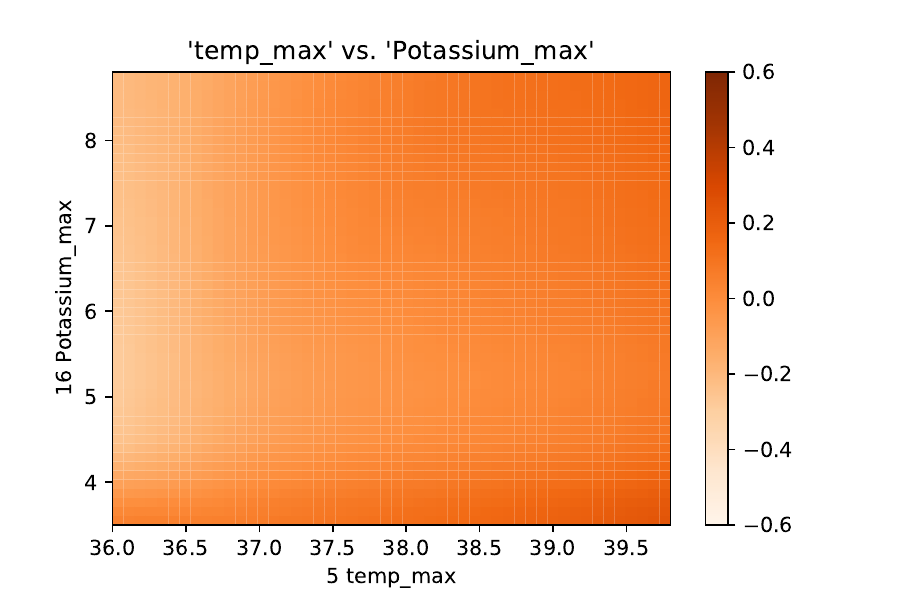}
    \includegraphics[width=0.23\textwidth]{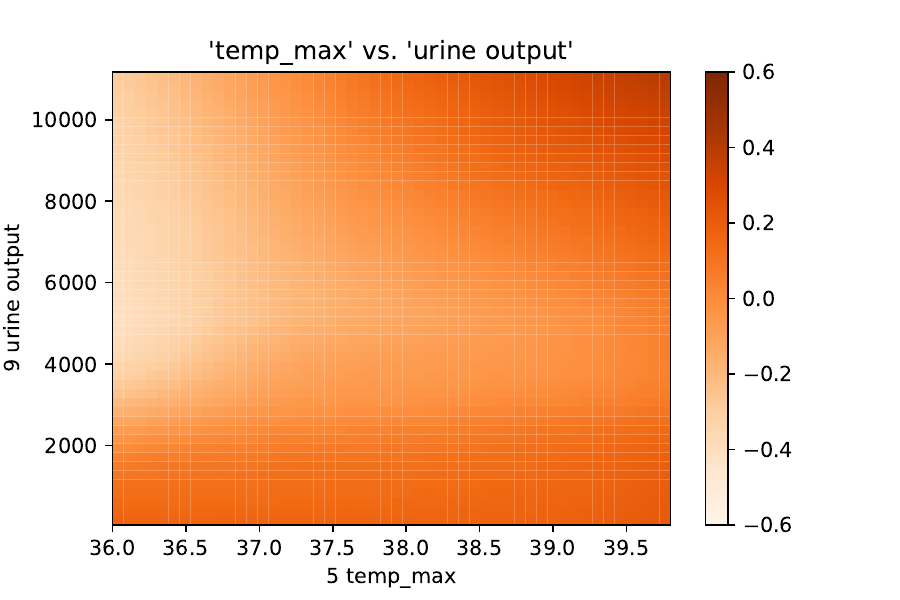}
    \caption{MIMIC 2D Shape Functions}
\end{figure}
\newpage

\begin{figure}[t!]
    \centering
    \includegraphics[width=0.23\textwidth]{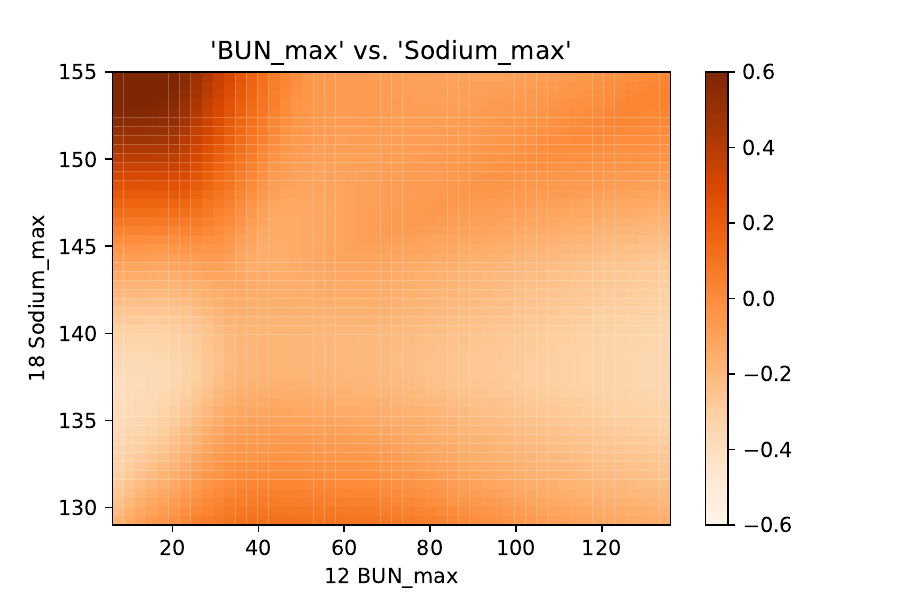}
    \includegraphics[width=0.23\textwidth]{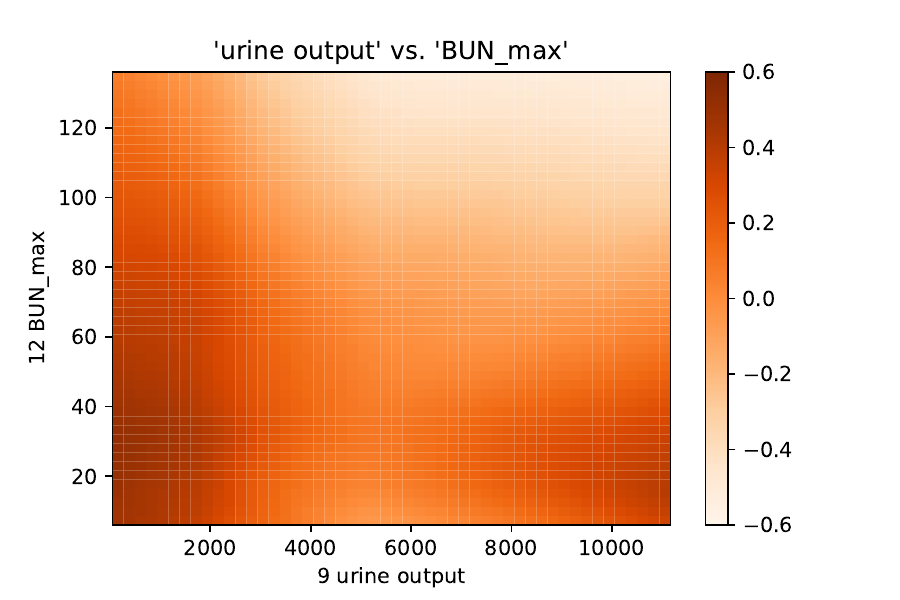}
    \includegraphics[width=0.23\textwidth]{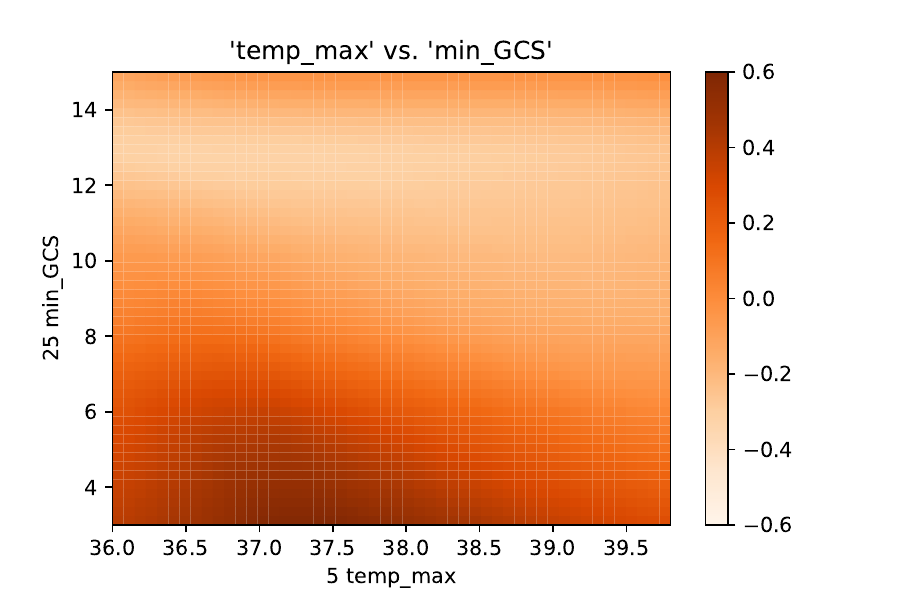}
    \includegraphics[width=0.23\textwidth]{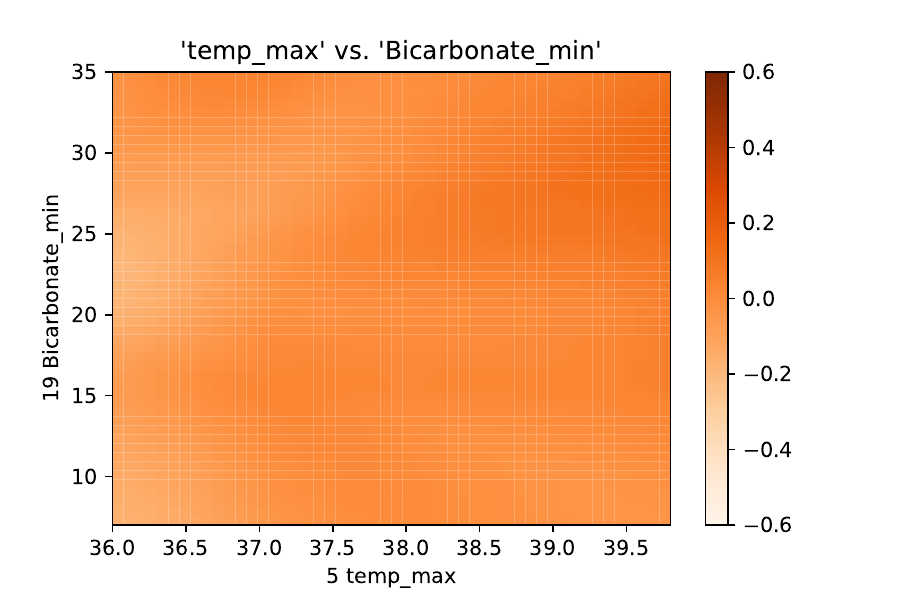}
    \includegraphics[width=0.23\textwidth]{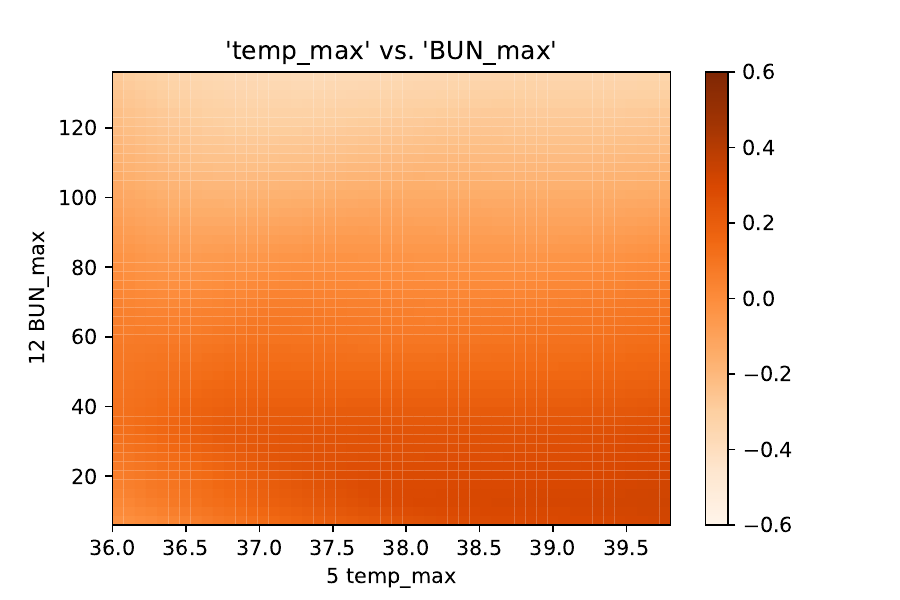}
    \includegraphics[width=0.23\textwidth]{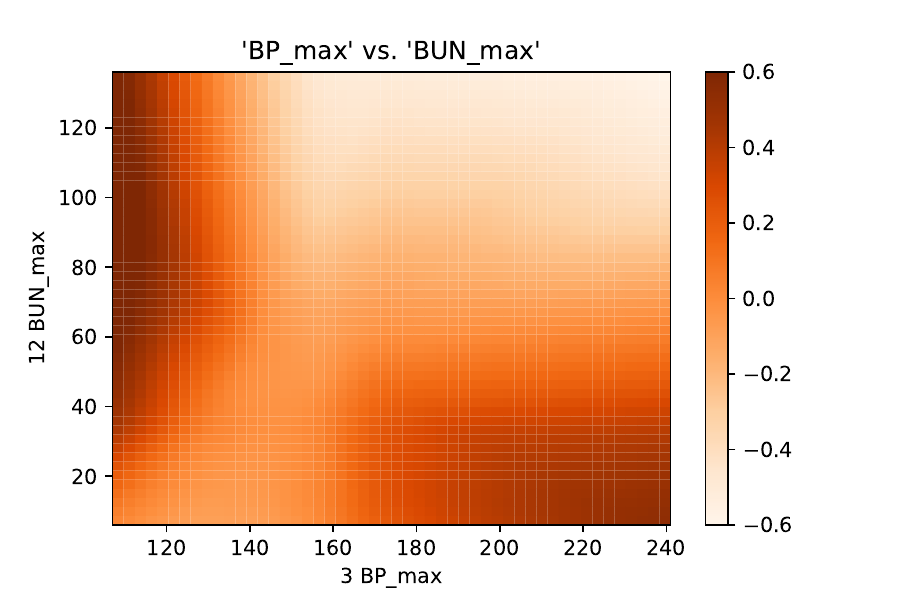}
    \includegraphics[width=0.23\textwidth]{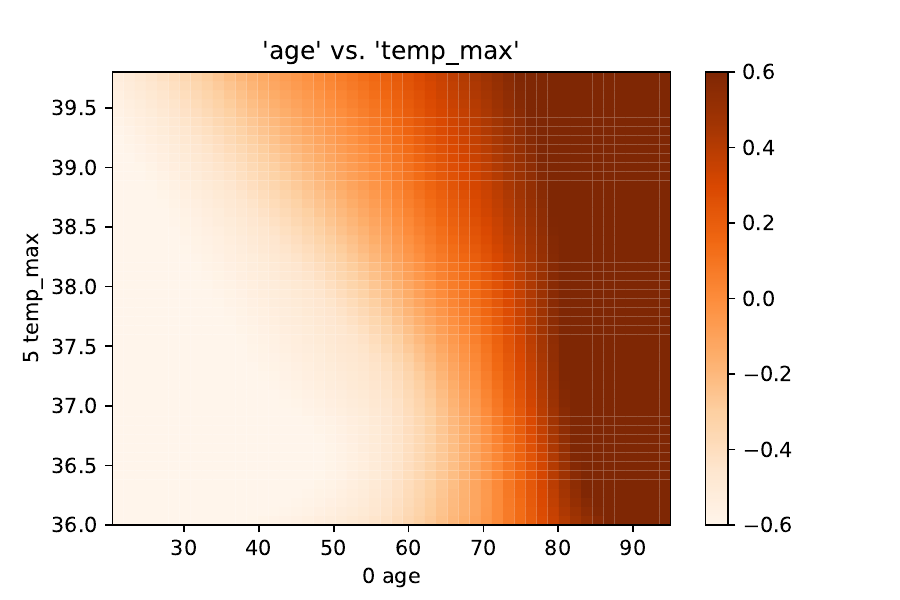}
    \includegraphics[width=0.23\textwidth]{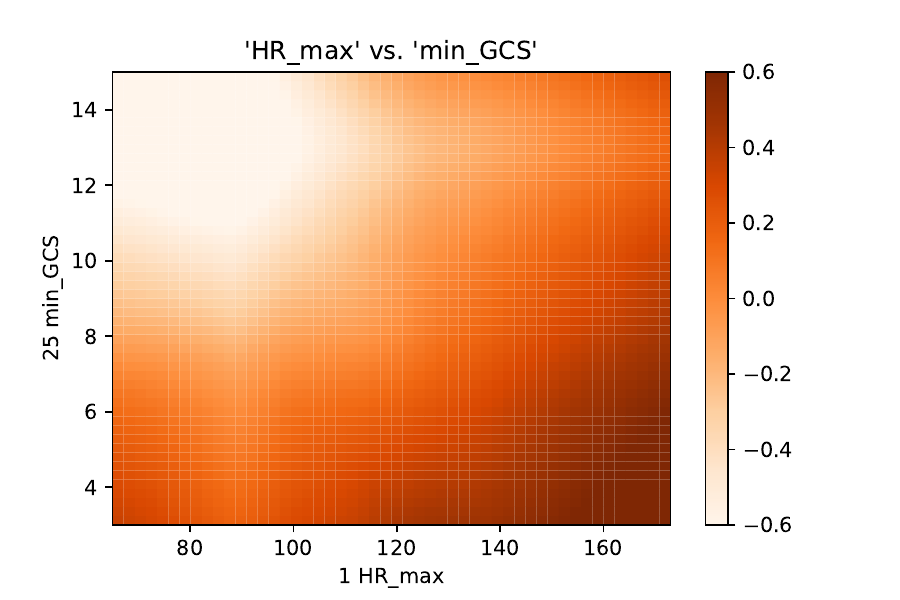}
    \caption{MIMIC 2D Shape Functions}
\end{figure}
\quad 

\end{document}